\newtheorem{theorem}{Theorem}
\newtheorem{lemma}{Lemma}
\newtheorem{remark}{Remark}
\newtheorem{corollary}{Corollary}
\newcommand{\E}{\mathbb{E}}
\newcommand{\RB}{\mathbb{R}}
\newcommand{\KL}{\mathbb{KL}}
\newcommand{\M}{\mathcal{M}}
\newcommand{\W}{\mathcal{W}}
\newcommand{\XM}{\mathcal{X}}
\newcommand{\G}{\mathcal{G}}
\newcommand{\V}{\mathcal{V}}
\newcommand{\N}{\mathcal{N}}
\newcommand{\EM}{\mathcal{E}}
\newcommand{\pb}{\mathbf{p}}
\newcommand{\OM}{\mathcal{O}}
\title{An Asynchronous Decentralized Algorithm for \\Wasserstein Barycenter Problem}
\author{
Chao Zhang$^1$
\and
Hui Qian$^1$
\and
Jiahao Xie$^{1}$
\affiliations
$^1$Zhejiang University
\emails
\{zczju,qianhui,xiejh\}@zju.edu.cn,
}
\begin{document}

\maketitle

\begin{abstract}
	Wasserstein Barycenter Problem (WBP) has recently received much attention in the field of artificial intelligence. 
	In this paper, we focus on the decentralized setting for WBP and propose an asynchronous decentralized algorithm (A$^2$DWB). 
	A$^2$DWB is induced by a novel stochastic block coordinate descent method to optimize the dual of entropy regularized WBP. 
	To our knowledge, A$^2$DWB is the first asynchronous decentralized algorithm for WBP.
	Unlike its synchronous counterpart, it updates local variables in a manner that only relies on the stale neighbor information, which effectively alleviate the waiting overhead, and thus substantially improve the time efficiency.
	Empirical results validate its superior performance compared to the latest synchronous algorithm.
\end{abstract}

\section{Introduction}
In this paper, we consider the Wasserstein barycenter problem(WBP) in the semi-discrete setting, which estimates the barycenter of a set of continuous probability distributions under the Wasserstein distance, i.e.,  
\begin{equation}\label{wbo}
\min_{\nu} \sum_{i=1}^m \W_\beta(\mu_i,\nu),
\end{equation}
where $\W_{\beta}$ is the (regularized-)Wasserstein distance, $\mu_i$'s are a set of continuous distributions, and $\nu$ is a discrete distribution on a fixed support of size $n$. 
WBP has recently attracted much attention in the artificial intelligence literature due to its promising performance in many data analysis  and machine learning applications \cite{bigot2018characterization,cuturi2014fast,li2008real,ye2014scaling,ye2017fast,courty2017optimal,srivastavA2018scalable}.

Most of the researches on WBP focus on single node settings where all the information are stored in a single machine. \cite{benamou2015iterative,borgwardt2018improved,yang2018fast,cuturi2014fast,carlier2015numerical,claici2018stochastic}.
However, when data is distributed over a network of sensors\cite{nedic2017distributed,nedic2017fast}, or the transmission of information is limited by communication or privacy constraints\cite{dvurechenskii2018decentralize}, such methods will be infeasible.
This necessitates the development of distributed schemes for WBP.

Although, off-the-shelf distributed optimizers seem to be of the ready availability for WBP computation, their virtues are realized at the expense of a substantial computational burden, since calculating the gradient of the Wasserstein distance in the semi-discrete setting is itself a difficult stochastic optimization problem\cite{peyre2019computational}. The recent interests on this topic are mainly focused on exploring WBP's dual formulation\cite{staib2017parallel,dvurechenskii2018decentralize}. 
They share the same idea that an inducing method is adopted to solve the dual problem of WBP.
This primal-dual transformation results in a natural distributed form and in the meanwhile avoids the computational burden of calculating Wasserstain distance or its gradient.

Among these work, \citeauthor{staib2017parallel} proposed a \emph{centralized} distributed model for WBP, which adopts a stochastic projected sub-gradient descent as its inducing method \cite{staib2017parallel}. 
Usually, centralized schemes are not robust to machine failures or network topological changes, and have long synchronization time. 
\citeauthor{dvurechenskii2018decentralize} proposed a \emph{decentralized} algorithm with a primal-dual stochastic gradient scheme as its inducing method, which only requires local communication between neighboring nodes, in contrast to the centralized distributed model \cite{dvurechenskii2018decentralize}. 
However, this method still needs to \emph{synchronize} all the computational nodes in each iteration, and thus all nodes must wait for the slowest communication edge in the network.

Inspired by the above work, an \emph{asynchronous} decentralized algorithm for WBP is proposed in this paper.
Similar to \cite{dvurechenskii2018decentralize}, we also work on the dual problem and use entropic regularization to take advantage of the problem smoothness. Nevertheless, our algorithm enjoys the smaller waiting overhead without compromising the rates of convergence.
Note that existing literature on asynchronous decentralized optimization cannot be directly applied to WBP in the semi-discrete setting since almost all of them address the primal problem and calculating the gradient of the Wasserstein distance are required. \cite{zhang2018asyspa,hendrikx2018accelerated,hendrikx2019asynchronous,lan2018asynchronous,lian2017asynchronous}. 
Our contribution are listed as follows.
\begin{itemize}
	\item An asynchronous decentralized algorithm (A$^2$DWB) for WBP is proposed by applying a novel stochastic block coordinate descent scheme(ASBCDS) as its inducing method, which only relies on the stale information of its neighbour.
	To the best of our knowledge, this is the first asynchronous decentralized algorithm for WBP.
	\item 	We prove that ASBCDS achieves the optimal convergence rate for stochastic smooth optimization.
	When applying ASBCDS to the dual of a decentralized problem, the distance to the primal optimality and the consensus distance  converge in the same order as the dual objective.
	It therefore provides theoretical guarantees of A$^2$DWB.
\end{itemize}
We conduct a simulated experiment on calculating the Wasserstein barycenter of a set of Gaussian distributions and a real-world experiment on MNIST dataset.
Empirical results validate the time efficiency of A$^2$DWB compared to the synchronous algorithm.
\section{Notation and Preliminaries}
For a vector $x\in \RB^n$, we use $x^{[l]}$ to denote its $l$-th block or coordinate, which will be clear from the context.
We denote $I_{n \times n}$ as the $n \times n$ identity matrix.
$[A]_{ij}$ denotes the $j$-th element in the $i$-th row of a given matrix $A$.
Given a positive semi-definite matrix $W$, $\lambda_{max}(W)$ denotes its largest eigenvalue and $\sqrt{W}$ denotes its square root matrix.
$A \otimes B$ denotes the Kronecker's product of two given matrices $A$ and $B$, .
For a positive number $m$, perm($m$) denotes the random permutation of $\{1,\cdots,m\}$.
We denote $\M_+^1(\RB^n)$ as the set of all Radon measures on $\RB^n$ and $S_1(n)=\{a \in \RB^n_+| \sum_{i=1}^n a^{[i]} =1\}$ as the probability simplex.
Given a graph $\G =(\V,\EM)$ with nodes $\V$ and edges $\EM$, the Laplacian matrix $\bar W$ of $\G$ is defined as 
\begin{displaymath}
[\bar W]_{ij} =\left\{ \begin{array}{ll}
-1,& \textrm{if } (i,j) \in \EM,\\
\textrm{deg}(i), & \textrm{if }i=j,\\
0,& \textrm{otherwise},
\end{array}\right.
\end{displaymath}
where deg($i$) denotes the degree of node $i$.

\subsection{Decentralized Wasserstein Barycenter}\label{DWB}
Given a Radon measure $\mu \in \M^1_+(\RB^n)$ on $\RB^n$ and a discrete probability measure $\nu = \sum_{i=1}^n p^{[i]} \delta (z_i)$ with finite support $z_i$'s $\in \RB^n$ and weight $p\in S_1(n)$, the entropy regularized semi-discrete Wasserstein distance between the continuous measure $\mu$ and discrete measure $\nu$ is defined as
\begin{equation*}
\W_\beta(\mu,\nu) = \min_{\pi \in \Pi(\mu,\nu)}\{\sum_{i=1}^n \int c_i(y)\pi_i(y) dy + \beta \KL(\pi|\vartheta)\},
\end{equation*}
where $c_i(y) = c(z_i,y)$ is a cost function for transporting one unit mass from $z_i$ to $y$, $\vartheta$ is the uniform distribution on the support of $\pi$,
$\KL(\pi|\vartheta) = \sum_{i=1}^n \int \pi_i(y) \log(\frac{\pi_i(y)}{\vartheta})dy$ is the KL-Divergence between $\pi$ and $\vartheta$,
and 
\begin{align*}
\Pi(\mu,\nu) = \{\pi \in &\M_+^1(R^n)\times S_1(n)| 
\\
&\sum_{i=1}^n \pi_i(y) = \mu(y),\int \pi_i(y) dy = p_i\}
\end{align*}
denotes the admissible transportation from $\mu$ to $\nu$. 

The regularized semi-discrete Wasserstein Barycenter in the decentralized setting is defined as the solution of the following optimization problem
\begin{equation}\label{wb}
\min_{p_1=\cdots=p_m,\atop p_1,\cdots,p_m \in S_1(n)} \sum_{i=1}^m \W_\beta(\mu_i,\nu_i),
\end{equation}
where $\mu_i$ is stored on the $i$-th node and all the $\nu_i$'s are discrete probability measures with weight $p_i$ on fixed support $\{z_1,\cdots,z_n\}$.
We assume that each node communicates over a static, connected and undirected graph $\G = (\V,\EM)$.
The graph imposes information constraints, specifically, each node $i$ only has access to its local $\mu_i$ and two nodes $i$ and $j$ are allowed to exchange information with each other if they are neighbors, i.e., $(i,j) \in \EM$.

If we denote the Laplacian matrix of the graph $\G$ as $\bar W$ and write 
$\W_{\beta,\mu_i}(p_i) =\W_{\beta}(\mu_i,\nu_i)$, problem (\ref{wb}) is equivalent to the following problem
\begin{equation}\label{wbl}
\min_{\sqrt{W}\pb =0, \atop p_1,\cdots,p_m \in S_1(n)} \W(\pb) = \sum_{i=1}^m \W_{\beta,\mu_i}(p_i),
\end{equation}
where $\pb=[p_1^T,\cdots,p_m^T]^T$ denotes the concatenation of the local  variable $p_i$'s and $W = \bar W \otimes I_{n\times n}$. 
Here, each node aims to minimize the global objective with its local information,
while it also needs to ensure that its local $\nu_i$ is equal to that of its neighbors.
It can be shown that $\W(\pb)$ is $\beta$-strongly convex.

Directly calculating $W_{\beta,\mu_i}(p_i)$ or its gradient is quite involved and needs to solve another difficult optimization problem.
\cite{dvurechenskii2018decentralize} transformed the constrained primal problem (\ref{wbl}) to the unconstrained dual form
\begin{eqnarray}\label{wbd}
\min_{\bold{\eta} \in \RB^{mn}} \W^*(\bold{\eta}) = \sum_{i=1}^m \W_{\beta,\mu_i}^*([\sqrt{W}\bold{\eta}]^{[i]}),
\end{eqnarray}
where $\bold{\eta} = [\eta_1^T,\cdots,\eta_m^T]^T$ denotes the stacked vector of dual variables, $[\sqrt{W}\bold{\eta}]^{[i]}$ denotes the $i$-th block of $\sqrt{W}\bold{\eta}$, and 
$$
\W_{\beta,\mu_i}^*(\bar \eta) = \E_{y \sim \mu_i} \beta\log(\frac{1}{\mu_i(y)}\sum_{i=1}^n \exp(\frac{\bar\eta^{[i]} - c_i(y)}{\beta}))
$$
denotes the Fenchel-Legendre dual function \cite{rockafellar2015convex} of $\W_{\beta,\mu_i}$.
They proposed an accelerated primal-dual stochastic gradient method to solve the dual problem, 
where updating a block in $\eta$ is equivalent to updating the local variable of a node in the decentralized network.
In each iteration, their method needs a global synchronization and every node need to communicate with its neighbors.
Their method needs $\OM(1/\sqrt{\epsilon})$ iterations to get $\epsilon$-accuracy on the primal optimality and consensus distance.
The properties of $\W^*(\eta)$ are summarized in the following lemma.
\begin{lemma}[Lemma 2 in \cite{dvurechenskii2018decentralize}]\label{lemma:wb}
$\W^*(\bold{\eta})$ is $(\lambda_{max}(W)/\beta)$-smooth and its $i$-th  stochastic partial gradient approximation is defined as
\begin{align}
\tilde \nabla \W^*(\bold{\eta})^{[i]} = \sum_{j=1}^m [\sqrt{W}]_{ij} \tilde \nabla \W^*_{\beta,\mu_i} (\bar \eta_j), i = 1,\cdots,m.
\end{align}
Here, $\bar \eta_j = [\sqrt{W} \eta]^{[j]}$ and $\tilde \nabla \W^*_{\beta,\mu_i} (\bar \eta_j) = \frac{1}{M}\sum_{r=1}^M p_j^{(r)}(\bar \eta_j)$ is the average of $M$ samples of $p_j(\bar \eta_j)$ with the $l$-th dimension of $p_j(\bar \eta_j)$ defined as
\begin{align}
p_j(\bar \eta_j)^{[l]} = \frac{\exp((\bar \eta_j^{[l]} -c_l(Y_j))/\beta)}{\sum_{l=1}^n \exp((\bar \eta_j^{[l]} -c_l(Y_j))/\beta)},
\end{align}
where $Y_j$ is a sample from $\mu_j$.
We have $\E \tilde \nabla \W^*(\bold{\eta}) = \nabla \W^*(\bold{\eta})$ and $\E\| \tilde \nabla \W^*(\bold{\eta}) - \nabla \W^*(\bold{\eta})\|^2 \le \lambda_{max}(W)/M$.
\end{lemma}

\subsection{A General Primal-Dual Formulation}
In this subsection, we abstract the property of the primal problem (\ref{wbl}) and its dual (\ref{wbd}) and give a general primal-dual formulation of this problem.

The primal problem of a general decentralized optimization can be written as follows.
\begin{equation}\label{primal}
\min_{x \in \XM} F(x) \quad s.t. \sqrt{W} x = 0.  
\end{equation}
The dual problem of (\ref{primal}) is
\begin{equation}\label{dual}
\min \varphi(\eta),
\end{equation}
where $\varphi(\eta) = \max_x \langle \eta,\sqrt{W} x\rangle - F(x)$.
The gradient of $\varphi(\eta)$ is given by 
$
\nabla \varphi(\eta) = \sqrt{W} x^*(\sqrt{W}\eta),
$
where $x^*(\sqrt{W} \eta) = \max_x \langle \sqrt{W} \eta,x\rangle -F(x)$ is called the primal of $\eta$.

As stated in Lemma \ref{lemma:wb}, $F(x)$ and $\varphi(\eta)$ have the following properties.
\begin{itemize}
	\item $F(x)$ is $\mu$-strongly convex and $\varphi(\eta)$ is $L=\lambda_{max}(W)/\mu$ -smooth.
	\item $\varphi(\eta)$ is a stochastic function, i.e., $\varphi(\eta) = \E_{\xi} \phi(\eta,\xi)$ and we can get access to its stochastic gradient $\nabla \phi(\eta,\xi)$.
	\item $\E [\nabla \phi(\eta,\xi) ]=\nabla \varphi(\eta)$ and its variance is bounded by $\E \|\nabla \varphi(\eta) - \nabla \phi(\eta,\xi)\|^2 \le \sigma^2$.
\end{itemize}

According to the strong duality property, the duality gap is zero since $F(x)$ is strongly convex. 
In the following theorem, we establish the relation between the solution of the primal problem and the dual problem\footnote{Due to the limit of space, we put the proofs of all the theorems in the appendix, which is uploaded on an anonymous website \url{https://drop.me/BNKjWE}.}.
\begin{theorem}\label{them:dual}
Given a dual variable $\eta$ and its primal variable $x = x^*(\sqrt{W}\eta)$, the distance between $x$ and the optimum of $F(x)$, i.e. $x^*$, is bounded by $\|x - x^*\|^2 \le \frac{2}{\mu}(\varphi(\eta) - \varphi(\eta^*))$ and the consensus distance is bounded by
$\|\sqrt{W} x\|^2 \le \frac{\lambda_{max}(W)}{\mu}(\varphi(\eta)-\varphi(\eta^*))$,
where $\eta^*$ is the optimum of $\varphi(\eta)$.
\end{theorem}

Theorem \ref{them:dual} indicates that the primal problem can be solved effectively by computing a solution of its dual by an inducing method.
As an update in one block of the dual variable corresponds to an update on the local variable of a node,
we resort to the stochastic block coordinate descent framework to obtain an algorithm where each node can update its own variable separately.
Besides, in order to be asynchronous and match the iteration complexity of the synchronous algorithm,
the inducing method should allow to use stale information and achieve acceleration.
Existing stochastic block coordinate descent algorithms can not be directly used here, since they cannot satisfy all the requirements simultaneously.

\section{Algorithms and Convergence Analyses}
In this section, we present the main algorithms and their convergence analyses.
We first introduce a novel accelerated stochastic block coordinate descent algorithm, which can use the stale information and achieve $\OM(1/\sqrt{\epsilon})$ iteration complexity.
Then, we propose a practical implementation PASBCDS of ASBCDS, and show the equivalence of these two algorithms.
Finally, a practical asynchronous accelerated  decentralized algorithm (A$^2$DWB) for WBP was proposed by utilizing PASBCDS as the inducing method.

\subsection{Accelerated Stochastic Block Coordinate Descent Algorithm with Stale Information}
Inspired by \cite{fang2018accelerating}, we propose an Accelerated Stochastic Block Coordinate Descent method with Stale information (ASBCDS) and list its details in Algorhtm \ref{alg:ASBCD}. 
We follow the 3 sequence acceleration technique used in \cite{fercoq2015accelerated} to update the variable $\lambda$, $\zeta$ and $\eta$.
Out-of-date variables are allowed to be used to calculate the descent direction.
However, instead of directly using the gradient of stale $\eta_{j_p(k+1)}^{[p]}$,
we first compensate it with $\sum_{i = j_p(k+1)}^k\rho_i (\lambda_{j_p(k+1)}^{[p]} - \eta_{j_p(k+1)-1}^{[p]})$ and then use the compensated variable $\omega_{j(k+1)}$to calculate the gradient.
As indicated by \cite{fang2018accelerating}, this compensation is critical to get an acceleration in this setting.
\begin{algorithm}[t]
	\caption{Accelerated  Stochastic Block Coordinate Descent Algorithm with Stale Information(ASBCDS)}
	\label{alg:ASBCD}
	\textbf{Input}: Initialization $\eta_0=\zeta_0=\lambda_0$, number of iteration $N$, $\theta_1 = 1/n$ and learning rate $\gamma$.
	\begin{algorithmic}[1] 
		\FOR{$k = 0,1,\cdots,K$}
		\STATE Set $\lambda_{k+1} = \theta_{k+1} \zeta_k + (1-\theta_{k+1})\eta_k$.
		\STATE For $p \in \{1,\cdots,m\}$, calculate $d_k =\frac{\theta_{k+1}(1-\theta_k)}{\theta_k}$, update $\rho_i = \prod_{l=j_p(k+1)}^i d_l$ and calculate
		\begin{small}
			\begin{align*}
			\omega_{j(k+1)}^{[p]} = &\eta_{j_p(k+1)}^{[p]}
			+\sum_{i = j_p(k+1)}^k\rho_i (\lambda_{j_p(k+1)}^{[p]} - \eta_{j_p(k+1)-1}^{[p]}). 
			\end{align*}
		\end{small}
		\STATE Choose $i_{k}$ and get stochastic partial gradient $g_{k+1} = \nabla \phi(\omega_{j(k+1)},\xi_{k+1})^{[i_{k}]}$, then calculate 
		\begin{displaymath}
		\zeta_{k+1} ^{[i_{k}]}=\zeta_{k}^{[i_{k}]} -\frac{\gamma}{m\theta_{k+1}}g_{k+1},
		\end{displaymath}
		and $\zeta_{k+1} ^{[i]} = \zeta_{k}^{[i]}$ for $i \ne i_k$.
		\STATE Calculate $\eta_{k+1} = \lambda_{k+1} + m\theta_{k+1}(\zeta_{k+1}-\zeta_k)$.
		\STATE Update  $\theta_{k+2} = \frac{\sqrt{\theta_{k+1}^4 +4 \theta_{k+1}^2 }-\theta_{k+1}^2}{2}$.
		\ENDFOR
	\end{algorithmic}
	\textbf{Output}: $\eta_{K+1}$.
\end{algorithm}

In the following lemma, we summarize the properties of the sequence $\theta_k$ in Algorithm \ref{alg:ASBCD}.
\begin{lemma}\label{lemma:theta}
Assume that $\theta_1 =  1/m$ and $\theta_{k+1} = \frac{\sqrt{\theta^4_k+4 \theta_k^2}-\theta_k^2}{2}$ for $k \ge 1$, then $\theta_k$ satisfies $\frac{1}{k-1+2m}\le \theta_k \le \frac{2}{k-1+2m}$ and $\frac{1-\theta_{k+1}}{\theta_{k+1}^2} = \frac{1}{\theta_k^2}$.
\end{lemma}
We now give the main convergence results of ASBCDS.
\begin{theorem}\label{them:main}
Assume that $\varphi(\eta)$ is $L$-smooth, the variance of stochastic gradient is bounded as $\E \| \nabla \varphi(\omega_{j(k)})- \nabla \phi(\omega_{j(k)},\xi_{k})\|^2  \le \frac{mL\theta_{k}\epsilon}{8}$, and the delay $\tau \le m$.
Then for Algorithm \ref{alg:ASBCD}, if the learning rate $\gamma$ satisfies $3 L \gamma + 12 L \gamma(\frac{\tau^2+\tau }{m}+2\tau)^2 \le 1$, we have $\E \varphi(\eta_k) -\varphi(\eta^*) \le \epsilon$ after $K =\OM(\frac{m\tau\sqrt{L}}{\sqrt{\epsilon}})$ iteration.
Moreover, if we replace the bounded variance assumption on $\nabla \phi(\omega_{j(k)},\xi_{k})$ with $\E \| \nabla \varphi(\lambda)- \nabla \phi(\lambda,\xi)\|^2  \le\sigma^2$ and sample $M_k$ mini-batch of samples in the $k$-th iteration, the total number of stochastic partial gradient oracle access is bounded by $\OM(\frac{m\tau \sqrt{L}}{\sqrt{\epsilon}}+\frac{m\tau^2\sigma^2}{\epsilon^2})$.
\end{theorem}
\begin{proof}[Sketch of Proof]
	We give the sketch of proof here.
	For the full proof, please check the appendix.
	We prove our results in five step.
	First we bound $\|\lambda_{k+1} -\omega_{j(k+1)}\|^2$, $\varphi(\eta_{k+1}) -\varphi(\eta^*)$ and $\|\zeta_{k+1} -\eta^*\|^2$ separately, and then combine these results together and choose proper parameters to obtain the convergence rate.	
\begin{itemize}
	\item [Step 1:]By expanding $\lambda_{k+1}$ through the update rule of ASBCDS,
	we can establish that the norm of $\lambda_{k+1} -\omega_{j(k+1)}$ is bounded by
	\begin{align*}
	\resizebox{1\linewidth}{!}{$
	\sum_{p=1}^{m} (\frac{\tau^2+\tau}{2m} + \tau) \sum_{i=1}^{\min(k-\tau,\tau)}(1+\frac{i}{m})
	\|\eta_{k-i+1}^{[p]}-\lambda_{k-i+1}^{[p]}\|^2$}.
	\end{align*}
	\item[Step 2:]By analyzing the function value, we have 
	\begin{align}
	&\E_{i_k}\varphi(\eta_{k+1})\le\varphi(\lambda_{k+1})+  \frac{\gamma L^2}{2D_1}\|\lambda_{k+1}-\omega_{j(k+1)}\|^2 
	\nonumber\\
	&+ \frac{\gamma }{2mD_2}\| \nabla \varphi(\omega_{j(k+1)})- \nabla \phi(\omega_{j(k+1)},\xi_{k+1})\|^2  
	\nonumber\\
	&-\gamma(1-\frac{L\gamma}{2}-\frac{D_1+D_2}{2})\E_{i_k}\|\frac{\eta_{k+1}^{[i_k]}-\lambda_{k+1}^{[i_k]}}{\gamma}\|^2,
	\nonumber
	\end{align}
	where $D_1$ and $D_2$ are auxiliary constants.
	\item[Step 3:]We then prove the upper bound of $\|\zeta - \eta^*\|^2$ as
	\begin{align}
	&\quad \frac{m^2}{2\gamma}\E\|\theta_{k+1}\zeta_{k+1} - \theta_{k+1}\eta^*\|^2 
	\nonumber\\
	&= \frac{1}{2\gamma}\E\|\eta_{k+1}- \lambda_{k+1}\|^2 +\frac{m^2}{2\gamma}\|\theta_{k+1}\zeta_k -\theta_{k+1}\eta^* \|^2 
	\nonumber\\
	&\quad +(1-\theta_{k+1})\varphi(\eta_{k}) + \theta_{k+1} \varphi(\eta^*)- \varphi(\lambda_{k+1})
	\nonumber\\
	&\quad +\langle \nabla \varphi(\lambda_{k+1}) -\nabla \varphi(\omega_{j(k+1)} ),\lambda_{k+1} -  \omega_{jl(k+1)}).
	\nonumber
	\end{align}
	\item[Step 4:]By combing the results from step $1\sim 3$, and adding up from $k=0 $ to $K$, we have
	\begin{align}
	&\frac{\E \varphi(\eta_{K+1}) -\varphi(\eta^*)}{\theta_{K+1}^2} +\frac{m^2}{2\gamma}\E\|\zeta_{K+1} - \eta^*\|^2
	\nonumber\\
	\le &\frac{1}{\theta_1^2} (\varphi(\eta_0) - \varphi(\eta^*)) +\frac{m^2}{2\gamma}\|\zeta_0 -\eta^* \|^2 + \sum_{k=1}^K\frac{\epsilon }{16\theta_{k+1}}
	\nonumber\\
	&-\big[\frac{1}{2}- 4(\frac{\gamma^2 L^2}{2D_1}+L\gamma)
	(\frac{\tau^2+\tau}{m} + 2\tau)^2)
	\nonumber\\
	&-\frac{L\gamma+D_1+D_2}{2} \big]\sum_{k=0}^K\frac{\gamma}{\theta_{k+1}^2}\E\|\frac{\eta_{k+1}-\lambda_{k+1}}{\gamma}\|^2.
	\nonumber
	\end{align}
	\item[Step 5:]
	By figuring out the order of $\theta_k$'s and  choosing proper $D_1$, $D_2$, $\gamma$ and $M_k$, we can conclude the theorem.
	
\end{itemize}
\end{proof}

\begin{algorithm}[t]
	\caption{Practical Implementation of ASBCDS (PASBCDS)}
	\label{alg:PASBCD}
	\textbf{Input}: Initialization $\eta_0=\zeta_0=\lambda_0$, number of iteration $K$, $\theta_1 = 1/n$ and learning rate $\gamma$.
	\begin{algorithmic}[1] 
		\FOR{$k = 0,1,\cdots,K$}
		\STATE For $p \in \{1,\cdots,m\}$, calculate \begin{displaymath}
		\omega_{j(k+1)}^{[p]} = u_{j_p(k+1)}^{[p]} +\theta_{k+1}^2v_{j_p(k+1)}^{[p]} ,
		\end{displaymath}
		where $p = \{1,\cdots,m\}$.
		\STATE Choose $i_k$ and get stochastic partial gradient $g_{k+1} = \nabla \phi(\omega_{j(k+1)},\xi_{k+1})^{[i_k]}$, then calculate $\delta_{k+1} = \frac{\gamma}{m\theta_{k+1}}g_{k+1}$
		\STATE Calculate \begin{displaymath}
		u _{k+1}^{[i_k]} = u_k^{[i_k]}  - \delta_{k+1},v_{k+1}^{[i_k]}  = v_k^{[i_k]}  + \frac{1-m \theta_{k+1}}{\theta_{k+1}^2}\delta_{k+1},
		\end{displaymath}
		and $u _{k+1}^{[j]} = u_k^{[j]} $, $v_{k+1}^{[j]} = v_k^{[j]}$ for $j \ne i_k$. 
		\STATE Update  $\theta_{k+2} = \frac{\sqrt{\theta_{k+1}^4 +4 \theta_{k+1}^2 }-\theta_{k+1}^2}{2}$.
		\ENDFOR
	\end{algorithmic}
	\textbf{Output}: $\eta_{K+1}= u_{K+1} + \theta_{K+1}^2 v_{K+1}$.
\end{algorithm}

\begin{remark}
	Compared to the optimal stochastic partial gradient complexity $\OM(\frac{m \sqrt{L}}{\sqrt{\epsilon}}+\frac{m\sigma^2}{\epsilon^2})$ for the stochastic smooth optimization \cite{lan2012optimal}, the complexity of ASBCDS $\OM(\frac{m\tau \sqrt{L}}{\sqrt{\epsilon}}+\frac{m\tau^2\sigma^2}{\epsilon^2})$ matches it up to a constant factor $\tau$.
	There have been researches on establishing optimal stochastic block coordinate descent algorithms that use stale information\cite{hannah2018texttt,fang2018accelerating}.
	However, most of them focus on the deterministic settings and algorithms for the stochastic optimization has been less investigated.
	To the best of our knowledge, this is the first stochastic block coordinate descent algorithm with stale information that can achieve the optimal complexity for the stochastic smooth optimization.
\end{remark}

According to Theorem \ref{them:dual} and \ref{them:main},
we can establish the following corollary on the property of the primal variable $x^*(\eta_{K+1})$.

\begin{corollary}\label{them:consensus}
	When applying ASBCDS to the dual problem (\ref{dual}), after $K  = \OM({m \tau \sqrt{L}}/{\sqrt{\epsilon}})$ iterations,
	the distance between $x_{K+1} = x^*(\sqrt{W} \eta_{K+1})$ and the optimum of the primal objective is bounded by $\E\|x_{K+1} - x^*\|^2 = \OM({\epsilon}/{\mu})$ and the consensus distance is bounded by
	$\E\|\sqrt{W} x_{K+1}\|^2 =\OM( {\lambda_{max}(W)\epsilon }/{\mu})$,
	where $x^*$ and $\eta^*$ denote the optimal solution to the primal problem and the dual problem, respectively.
\end{corollary}

\subsection{Practical Implementation of ASBCDS}
Although the update of $\zeta$ is block-wise, we still need full vector operation in ASBCDS when updating $\lambda$ and $\eta$.
Besides, in line 3 of Algorithm \ref{alg:ASBCD}, we need to calculate $\rho_i$,
which is a little complicated.
To tackle these problems,
we follow similar change of variable technique used in \cite{fercoq2015accelerated,fang2018accelerating}, and rewrite ASBCDS into a new form.
This practical implementation is referred to as PASBCDS and detailed in Algorithm \ref{alg:PASBCD}.
The equivalence between these two algorithms is summarized as follows.
\begin{theorem}[Equivalence between ASBCDS and PASBCD]
	If we take the same $j_p(k+1)$ and $\xi_{k+1}$ in each iteration of Algorithm \ref{alg:ASBCD} and Algorithm \ref{alg:PASBCD}, then we have $\lambda_{k+1} =u_k + \theta_{k+1}^2 v_k $, $\zeta_{k+1} = u_{k+1}$ and $\eta_{k+1} = u_{k+1} + \theta_{k+1}^2 v_{k+1}$ for all $k = 0,\cdots, K$. 
\end{theorem}

\subsection{Asynchronous Accelerated Decentralized Wasserstein Barycenter algorithm}
In this subsection, we present a practical asynchronous algorithm A$^2$DWB for WBP, which is induced by applying PASBCDS to problem (\ref{wbd}).
The detail of A$^2$DWB is listed in Algorithm \ref{alg:ADWB}.
Similar to Algorithm 3 in \cite{dvurechenskii2018decentralize}, we change the variable and denote $\bar{\omega} =\sqrt{W} \omega$, $\bar u = \sqrt{W} u$ and $\bar v = \sqrt{W} v$.
At time $t_k$, one node $i_k$ is activated and its parameters are updated.
Note that instead of calculating $\omega_{j(k+1)}^{[p]} = u_{j_p(k+1)}^{[p]} +\theta_{k+1}^2v_{j_p(k+1)}^{[p]}$ and its stochastic partial gradient, 
we directly use the local stored information.
This is equivalent to compensate $u_{j_p(k+1)}^{[p]}$ with $\theta_{j_p(k+1)+1}^2v_{j_p(k+1)}^{[p]}$ for $p \ne i_k$.
Empirically, this works well since if the delay $\tau$ is not large, $\theta_{j_p(k+1)+1}^2\approx\theta_{k+1}^2$.

Practically, we need to specify the activation time $t_k$ and node $i_k$ in A$^2$DWB.
This can be implemented effectively in the following way:
a seed is distributed to each node at the beginning and then a sequence of $t_k$'s and $i_k$'s is generated with the common seed.
Each nodes then check the sequence to determine when it should be activated.
Note that the activation scheme is  determined based on a trade-off between speed and accuracy:
If the nodes is activated more frequently, then more iterations can be performed in a given time, 
but the local stale gradient will be more out-of-date which will deteriorate the accuracy.
On the other hand, if the activation interval is long,  each node can get more recent gradient from its neighbors at the cost of less iterations run in the same time period.

\begin{algorithm}[t]
	\caption{Asynchronous Accelerated  Decentralized Wasserstein Barycenter algorithm(A$^2$DWB)}
	\label{alg:ADWB}
	\textbf{Input}: Initialization $\bar u_0=\bar v_0=\bar\omega_0=0$, number of iteration $K$, $\theta_1 = 1/n$ and learning rate $\gamma$.
	\begin{algorithmic}[1] 
		\STATE Calculate $\tilde \nabla \W_{\gamma,\mu_i}([\bar\lambda_0]^{[i]})$ on each node $i$ and share it with its neighbors.
		\FOR{$k = 0,1,\cdots,K$}
		\STATE At time $t_k$, randomly choose a node $i_k$ and activate it.
		\IF{Node $i$ is activated}
		\STATE Calculate 
		\begin{displaymath}
		\bar \omega_{j(k+1)}^{[i]} = \bar u_{k}^{[i]} +\theta_{k+1}^2 \bar v_{k}^{[i]} .
		\end{displaymath}
		\STATE  Generate $M_{k+1}$ samples from measure $\mu_{i_k}$, set $g_{i} = \tilde \nabla \W^*_{\beta,\mu_{i}} (\bar \omega_{j(k+1)}^{[i]})$ as in Lemma \ref{lemma:wb} and broadcast it to its neighbors. 
		\STATE Update 
		$$\delta_{k+1} = \frac{\gamma}{m\theta_{k+1}}(g_{i}+\sum_{j\in\textrm{neigh}(i)} W_{ij} [\tilde \nabla \W^*_{\beta,\mu_j}]_{local} ),$$
		where $[\tilde \nabla \W^*_{\beta,\mu_j}]_{local} $ is the local stored stale gradient obtained from its neighbor $j$.
		\STATE Calculate \begin{displaymath}
		u _{k+1}^{[i]} = u_k^{[i]}  - \delta_{k+1},v_{k+1}^{[i]}  = v_k^{[i]}  + \frac{1-m \theta_{k+1}}{\theta_{k+1}^2}\delta_{k+1}.
		\end{displaymath}
		\ELSE 
		\STATE $u _{k+1}^{[j]} = u_k^{[j]} $, $v_{k+1}^{[j]} = v_k^{[j]}$ for $j \ne i_k$.
		\ENDIF
		\STATE Update  $\theta_{k+2} = \frac{\sqrt{\theta_{k+1}^4 +4 \theta_{k+1}^2 }-\theta_{k+1}^2}{2}$.
		\ENDFOR
	\end{algorithmic}
	\textbf{Output}: $\eta_{K+1}= u_{K+1} + \theta_{K+1}^2 v_{K+1}$.
\end{algorithm}
	\vskip -0.2in

\section{Experiments}
In this section, we present the experimental results for A$^2$DWB.
We compare it with its synchronous counterpart, i.e., Algorithm 3 (referred as DCWB) in \cite{dvurechenskii2018decentralize}.
To show the impact of compensation, 
we also include a naive asynchronous algorithm, named as A$^2$DWBN,  
where each node directly uses the stale gradient of $\eta_{j_p(k+1)}$ to update it local variable.
Following similar experiment setting of \cite{dvurechenskii2018decentralize}, 
we conduct empirical studies on two tasks:
one is a simulated experiment of calculating the Wasserstein barycenter of a set of Gaussian distributions and the other is a real world application, which computes the Wasserstein barycenter of samples of digit from the MNIST dataset.

In both of the tasks, we simulate a network with $m =500$ nodes.
The network topologies considered in our experiments, in descending order of connectivity, are complete, Erd\H{o}s-R\'{e}nyi, cycle and star graphs.
The communication time $t$ of one node transferring its information to its neighbor is generated from a categorical distribution with support $[0.2\textrm{s},0.4\textrm{s},0.6\textrm{s},0.8\textrm{s},1\textrm{s}]$, and $t$ is equally distributed on the support.
In A$^2$DWB and A$^2$DWBN, we activate all the nodes one by one according to perm$(m)$ once every $0.2$ second, i.e., the smallest time interval one node can receive the information from its neighbors.
We run both algorithms for 200 seconds and report the dual objective value and the consensus distance as the metrics of performance, since the distance to the primal optimum is hard to directly calculated and it is bounded by the dual optimality.  
\begin{figure}[t]
	\centering
	\begin{subfigure}
		\centering
		\includegraphics[trim={0cm 0cm 0cm 0cm},clip,width=3.7cm,height= 3.2cm]{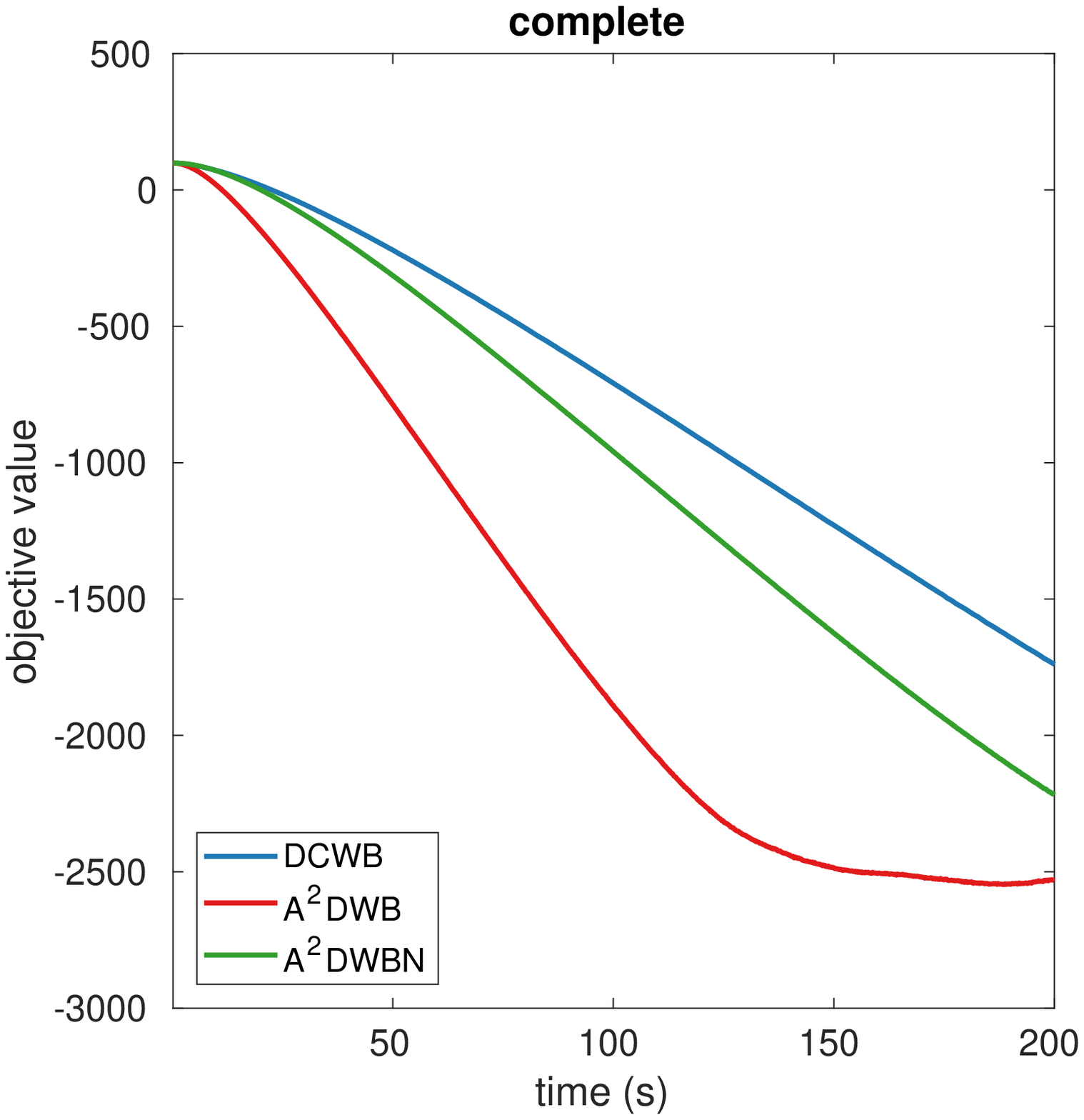}
	\end{subfigure}
	~
	\begin{subfigure}
		\centering
		\includegraphics[trim={0cm 0cm 0cm 0cm},clip,width=3.7cm,height= 3.2cm]{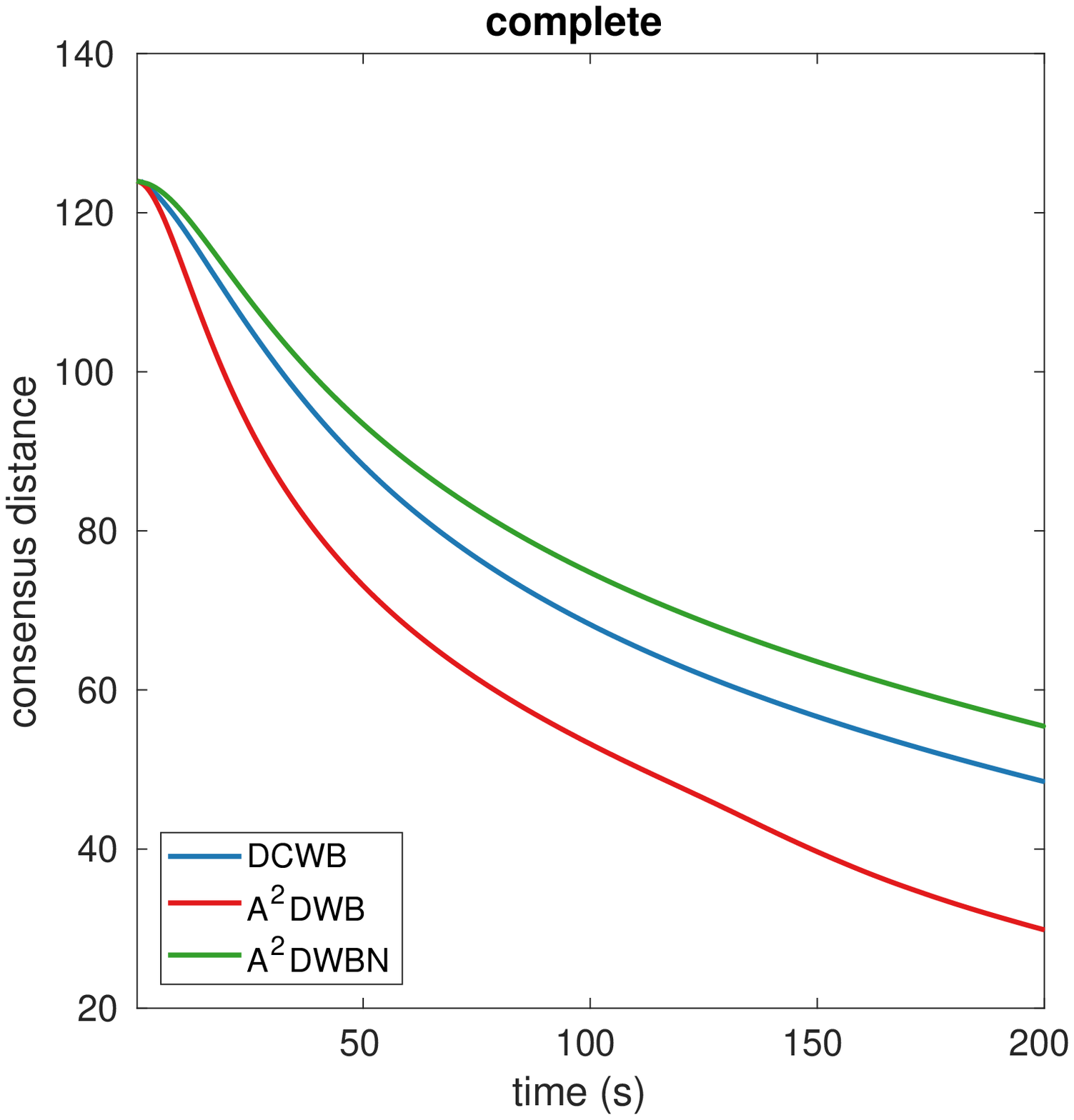}
	\end{subfigure}
	~
	\begin{subfigure}
		\centering
		\includegraphics[trim={0cm 0cm 0cm 0cm},clip,width=3.7cm,height= 3.2cm]{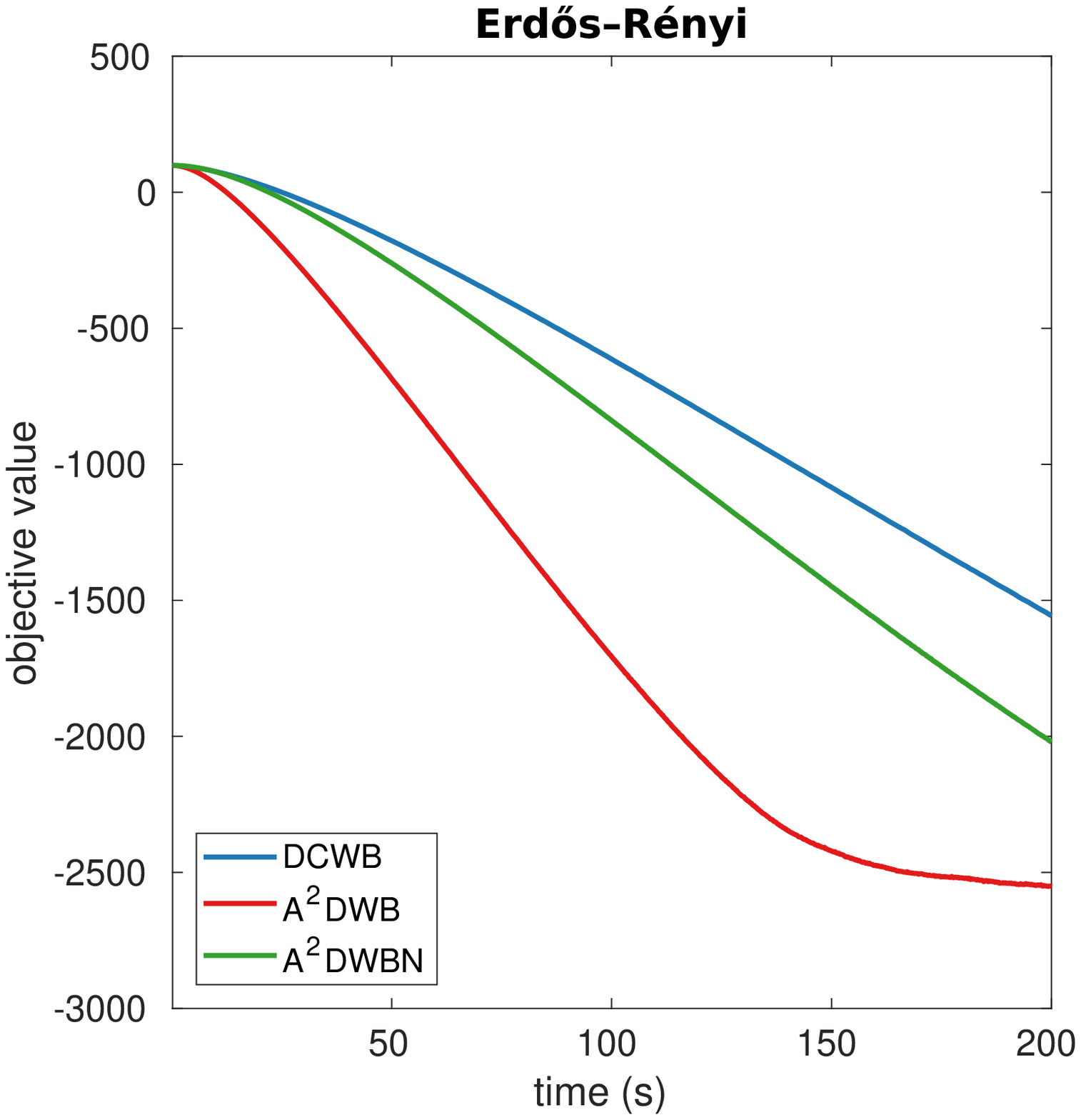}
	\end{subfigure}
	~
	\begin{subfigure}
		\centering
		\includegraphics[trim={0cm 0cm 0cm 0cm},clip,width=3.7cm,height= 3.2cm]{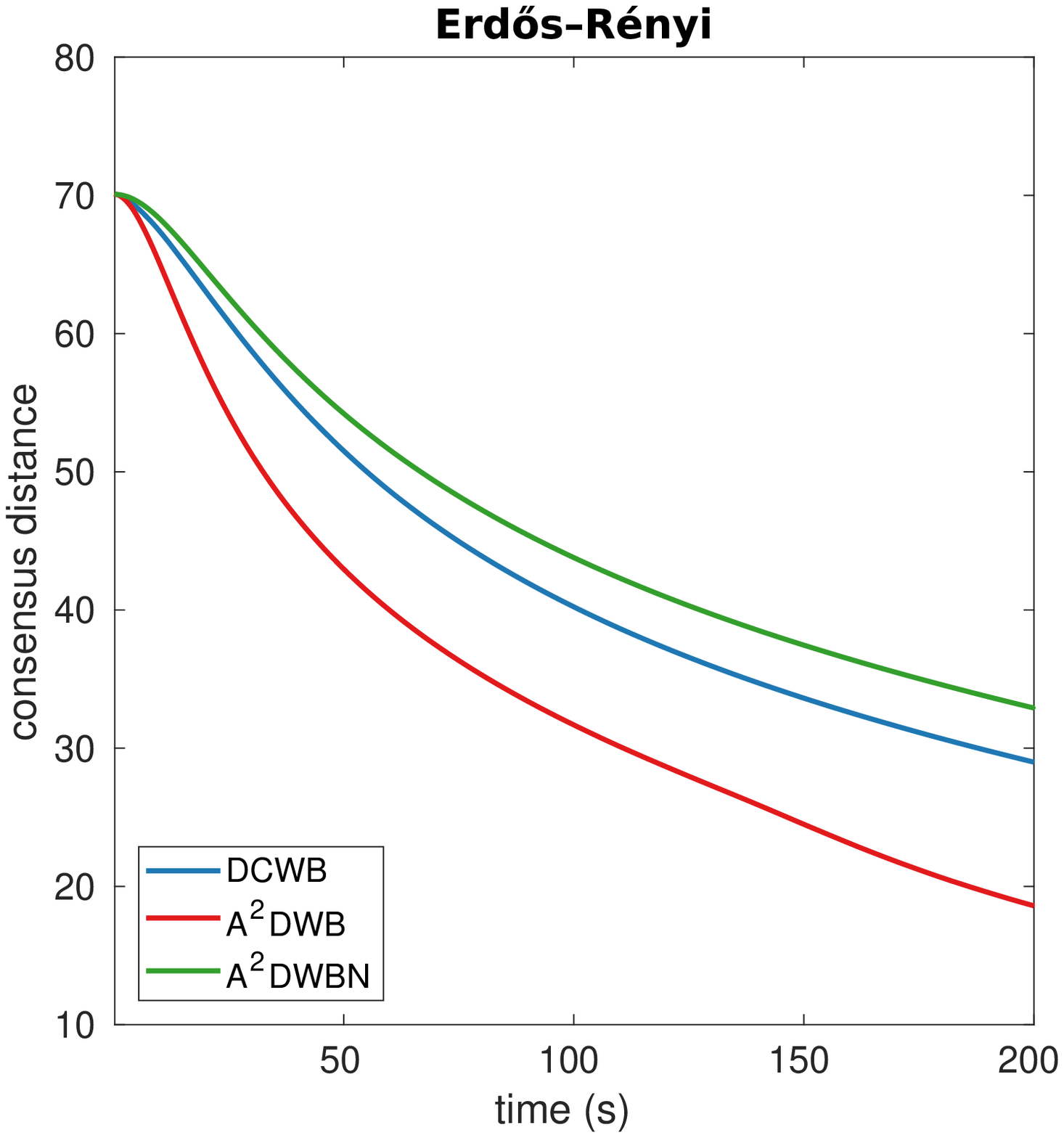}
	\end{subfigure}
	~
	\begin{subfigure}
		\centering
		\includegraphics[trim={0cm 0cm 0cm 0cm},clip,width=3.7cm,height= 3.2cm]{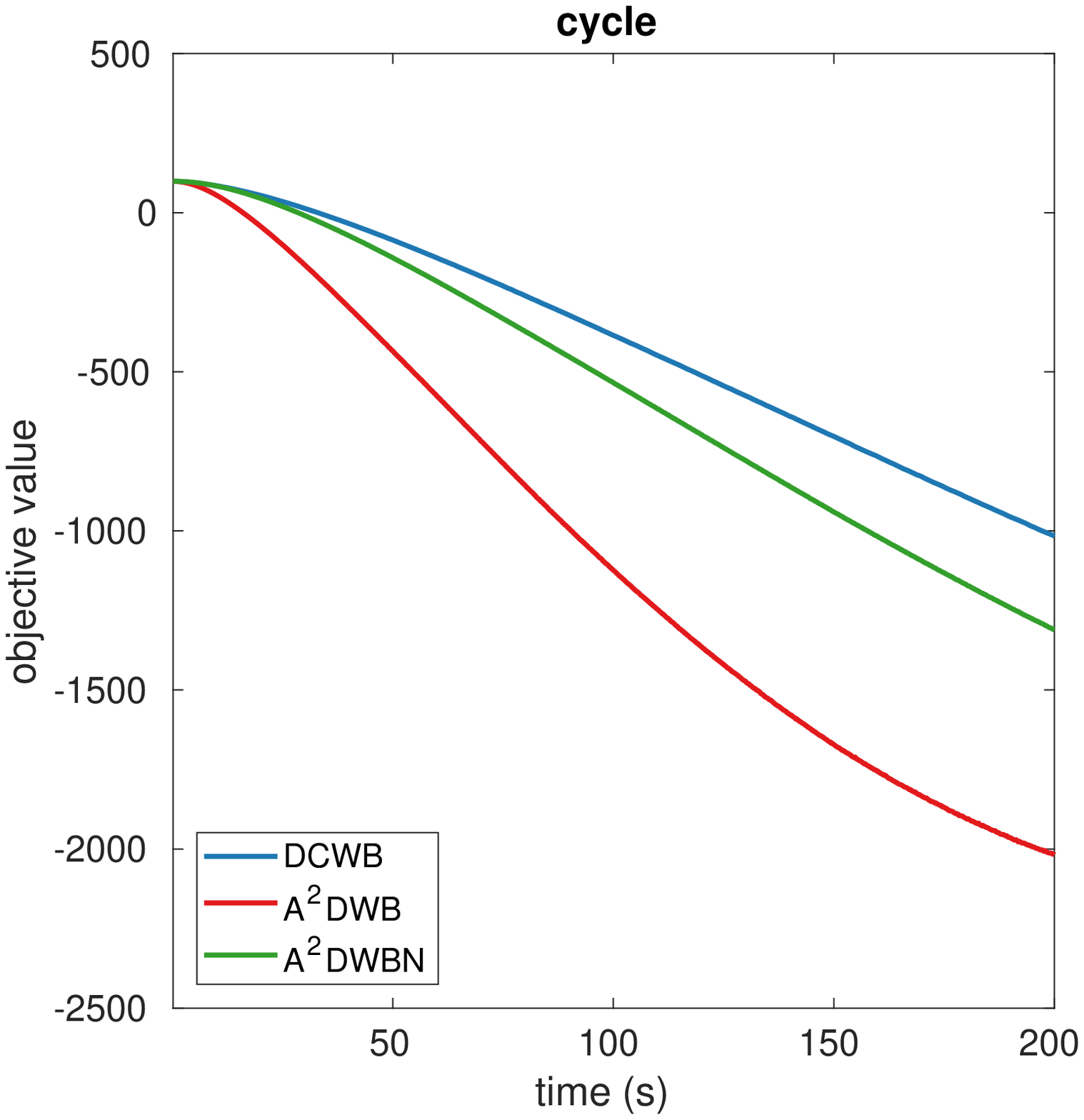}
	\end{subfigure}
	~
	\begin{subfigure}
		\centering
		\includegraphics[trim={0cm 0cm 0cm 0cm},clip,width=3.7cm,height= 3.2cm]{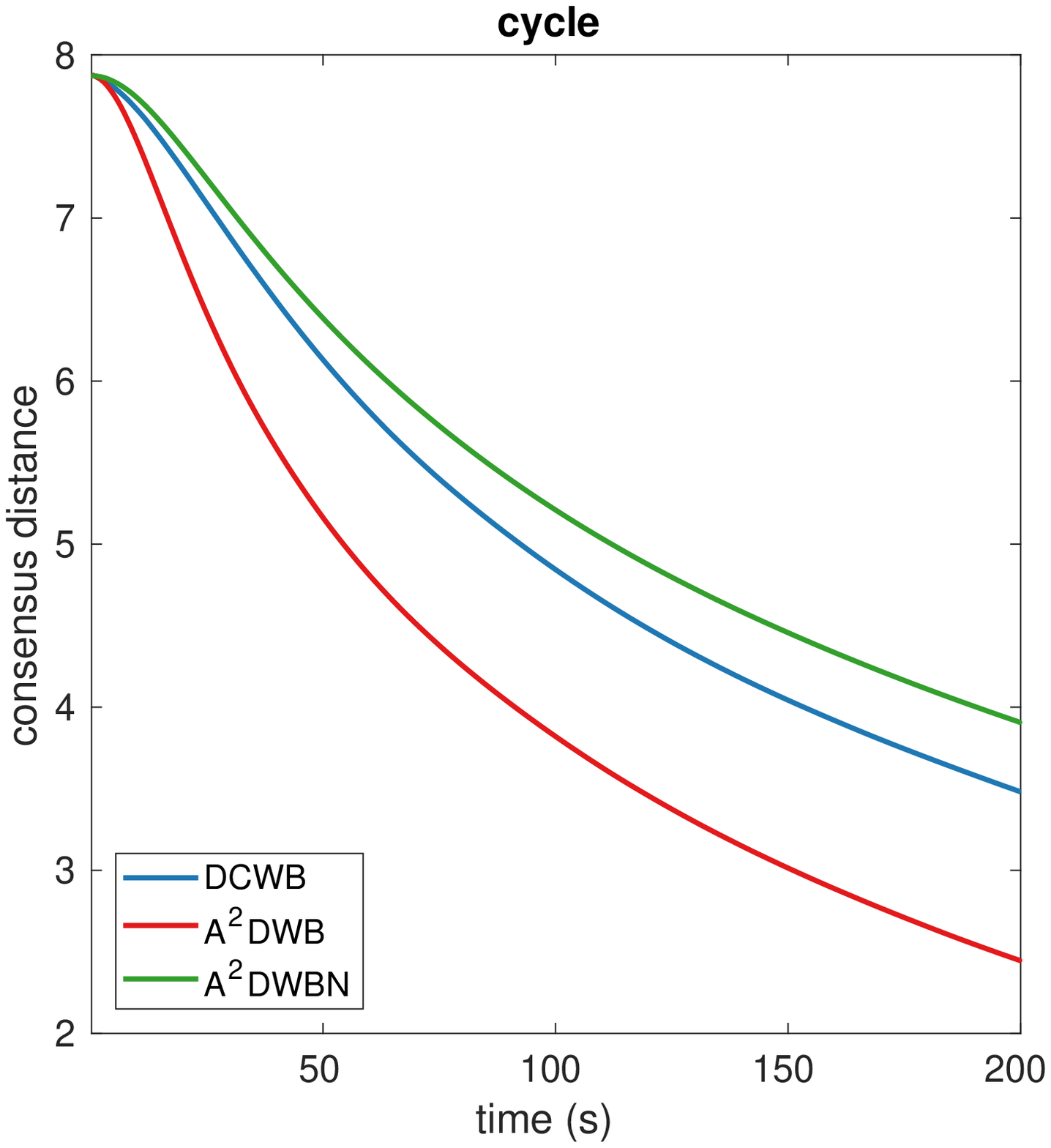}
	\end{subfigure}
	~
	\begin{subfigure}
		\centering
		\includegraphics[trim={0cm 0cm 0cm 0cm},clip,width=3.7cm,height= 3.2cm]{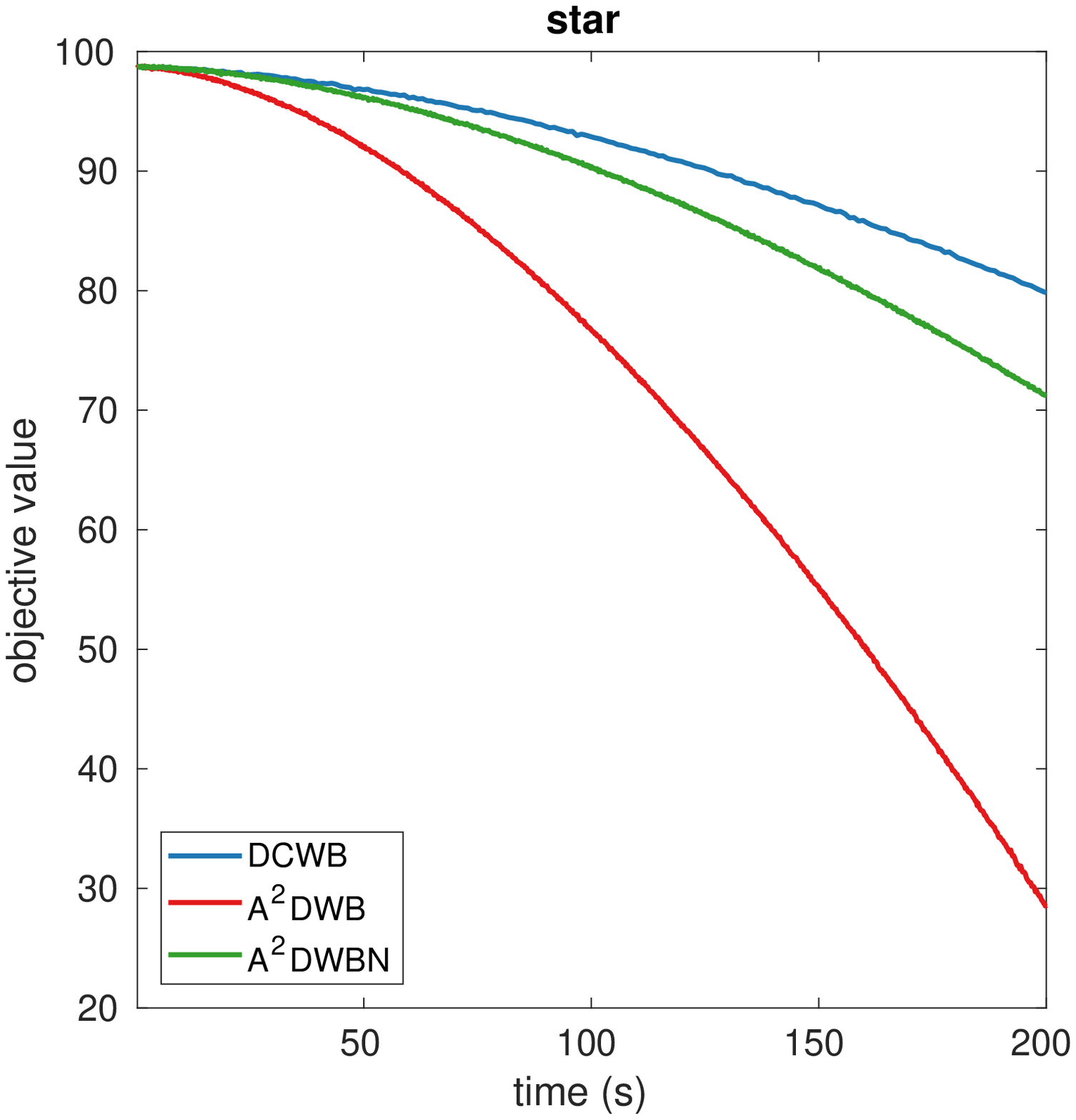}
	\end{subfigure}
	~
	\begin{subfigure}
		\centering
		\includegraphics[trim={0cm 0cm 0cm 0cm},clip,width=3.7cm,height= 3.2cm]{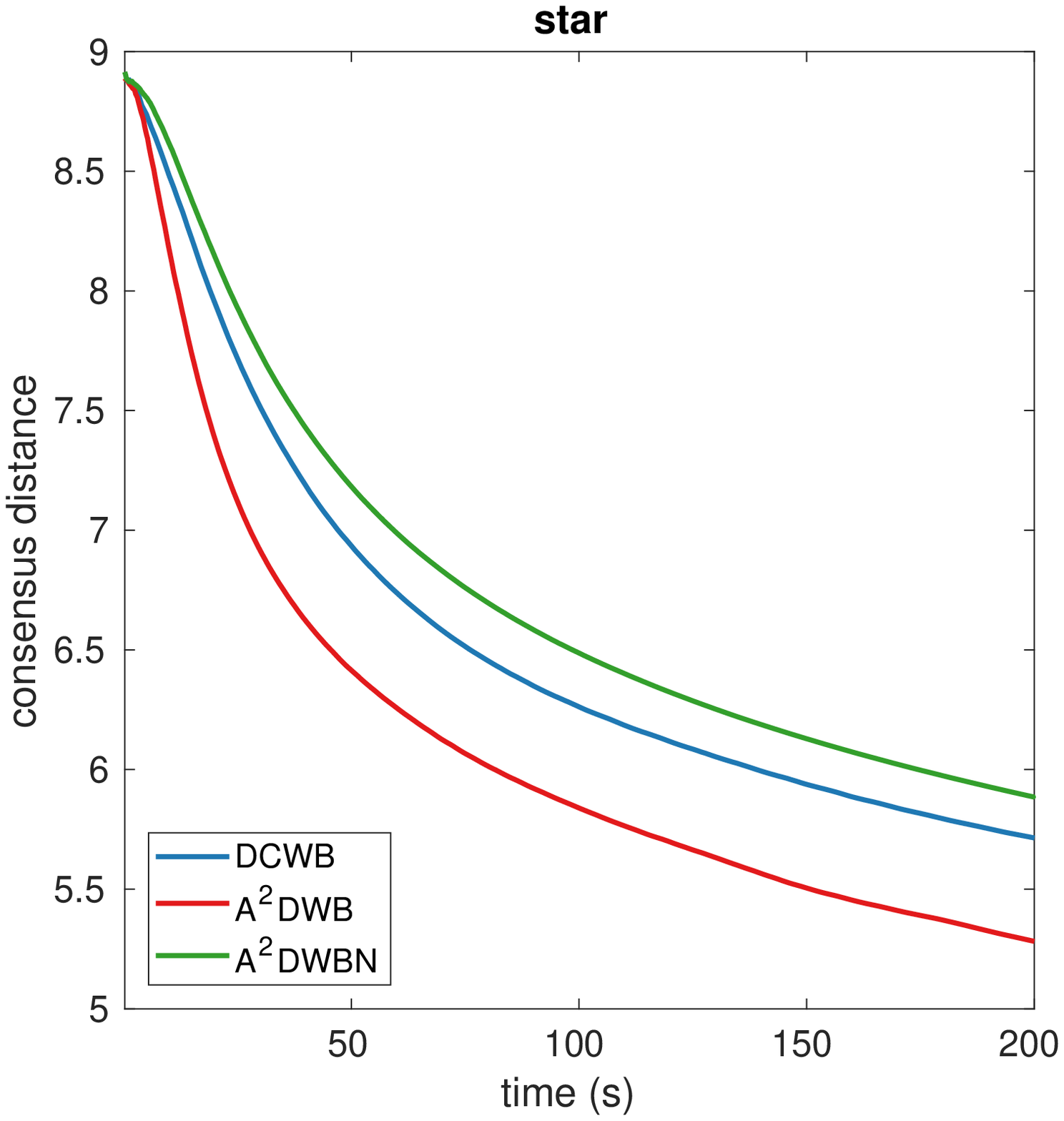}
	\end{subfigure}
	\caption{Simulation Experiment with Gaussian Distributions.
			From the top to the bottom, we sequentially report the results on complete, Erd\H{o}s-R\'{e}nyi, cycle and star networks.}
	\label{fig:exp1}
	\vskip -0.2in
\end{figure}

\subsection{Barycenter of Gaussian Distributions}
In the simulated experiment, we consider the following setting:
each node $i$ can query realizations from a privately held random distribution $\mu_i = \N(\theta_i,\sigma_i^2)$, where $\N(\theta_i,\sigma_i^2)$ is a univariate Gaussian distribution with mean $\theta_i$ and variance $\sigma_i^2$.
Each $\theta_i$  and $\sigma_i$ are randomly chosen from $[-4,4]$ and $[0.1,0.6]$, respectively.
The goal is to compute a discrete distribution $p \in S_1(n)$ that solves the WBP problem (\ref{wb}).
Here, we assume that $n=100$ and the support of $p$ consists of 100 points which are equally spaced on the segment $[-5,5]$.

The results are shown in Figure \ref{fig:exp1}.
From the figure, it can be verified that A$^2$DWB constantly outperforms the other algorithms in both the objective value and the consensus distance. 
Besides, we can see from the figure that the network topology do effect the convergence of the algorithms:
the convergence speeds drop severely as the connectivity of the networks reduces from complete-connected to poorly-connected.
\begin{figure}[!t]
	\centering
	\begin{subfigure}
		\centering
		\includegraphics[trim={0cm 0cm 0cm 0cm},clip,width=3.7cm,height= 3.2cm]{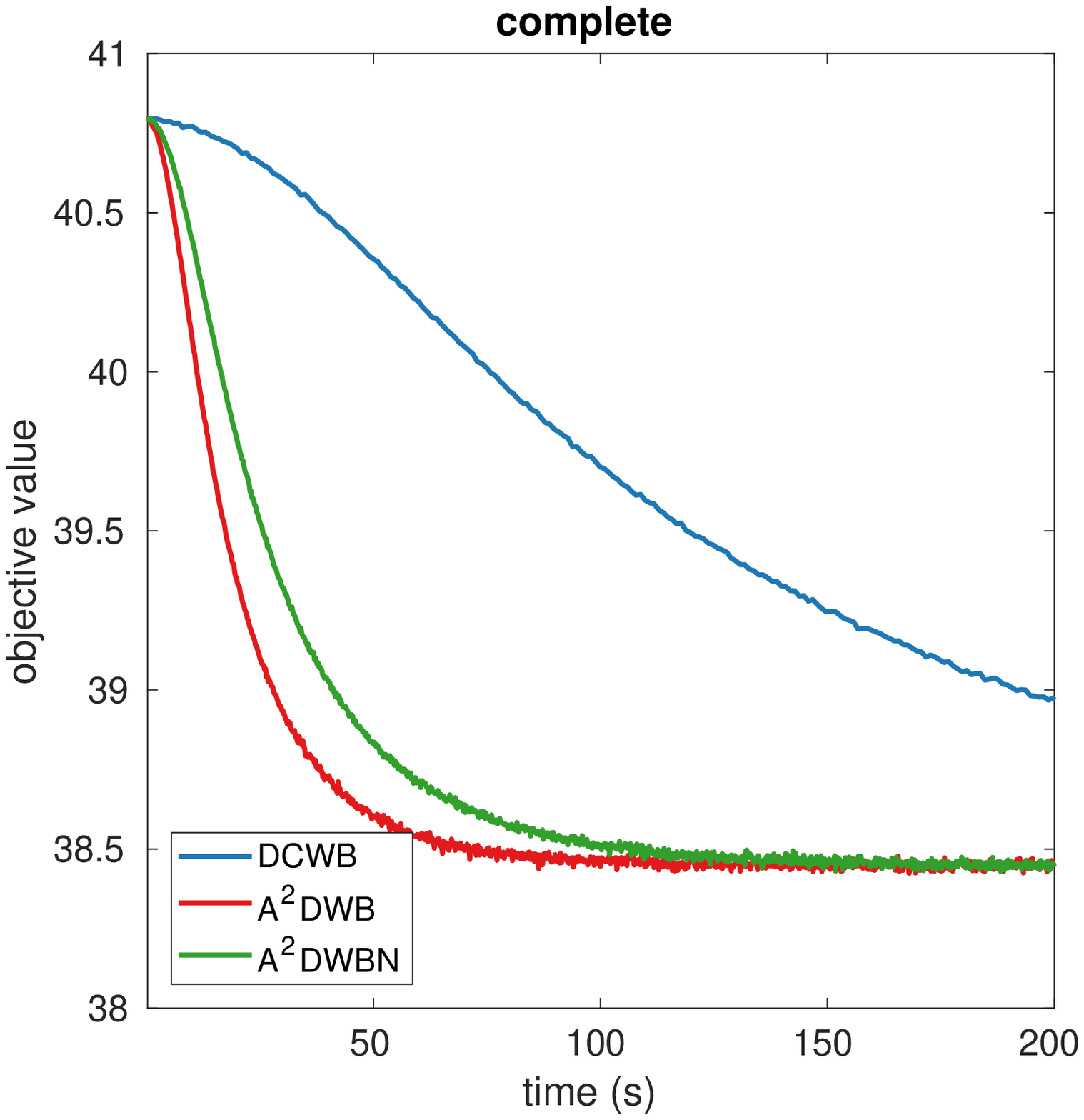}
	\end{subfigure}
	~
	\begin{subfigure}
		\centering
		\includegraphics[trim={0cm 0cm 0cm 0cm},clip,width=3.7cm,height= 3.2cm]{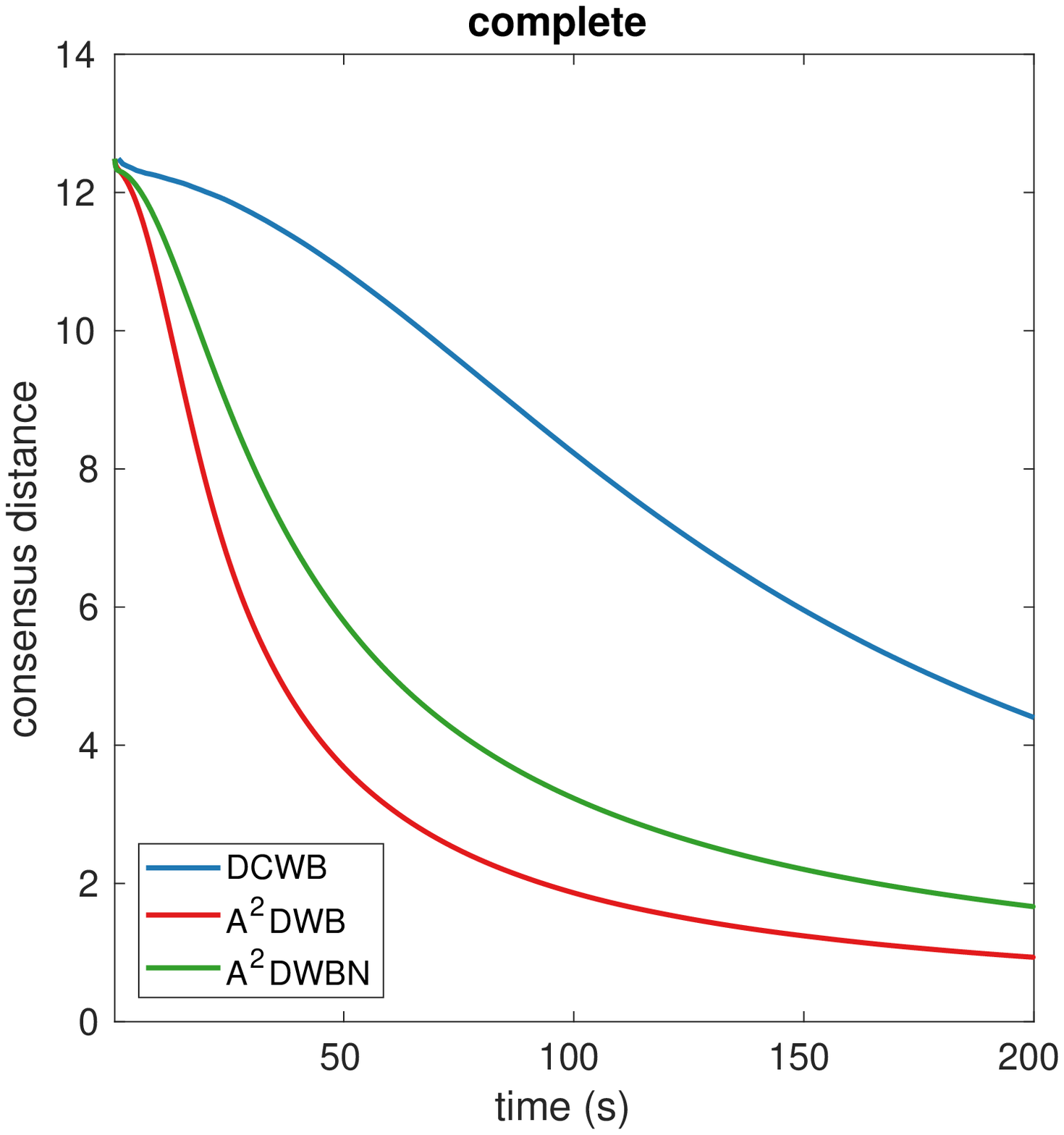}
	\end{subfigure}
	~
	\begin{subfigure}
		\centering
		\includegraphics[trim={0cm 0cm 0cm 0cm},clip,width=3.7cm,height= 3.2cm]{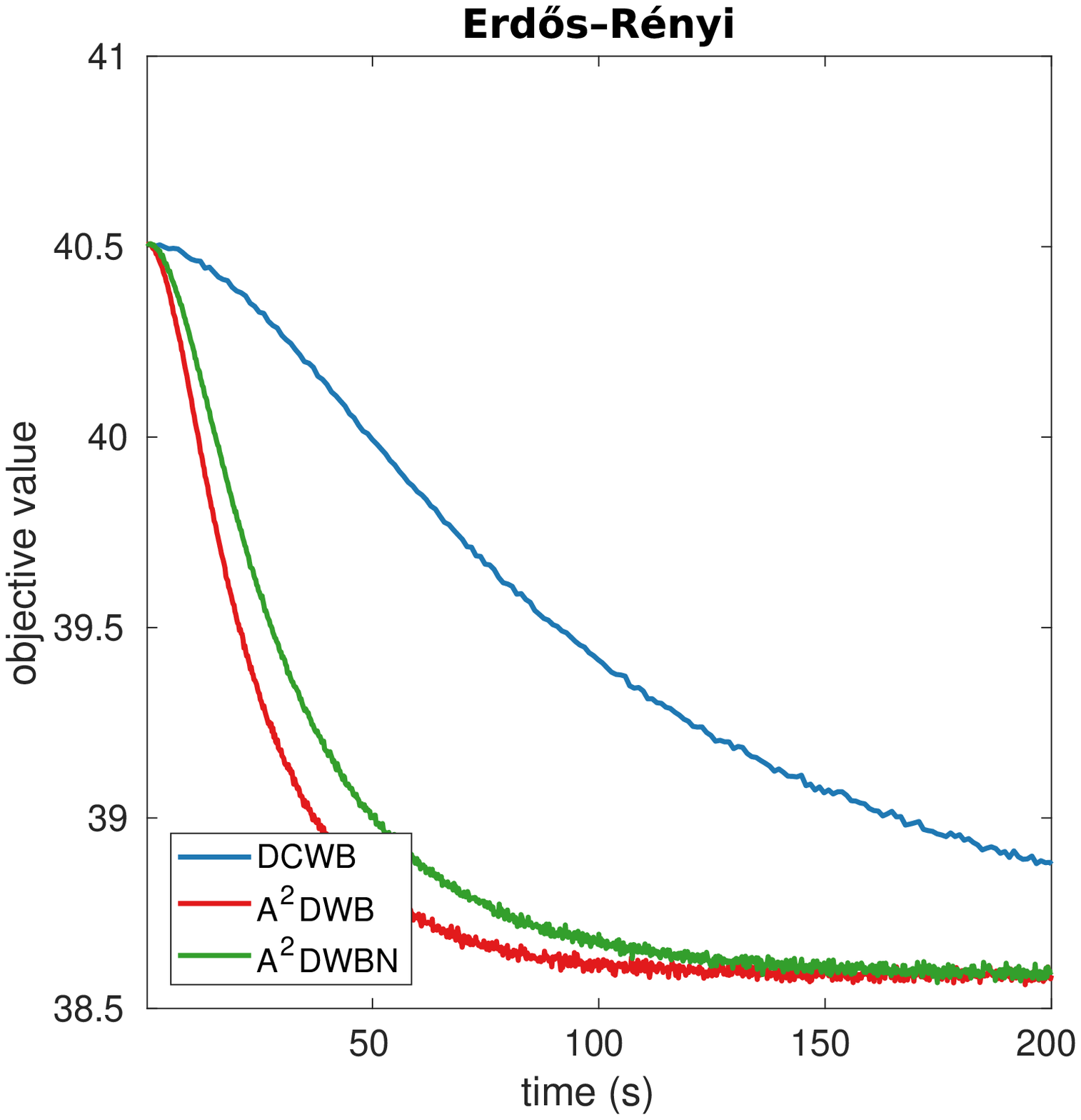}
	\end{subfigure}
	~
	\begin{subfigure}
		\centering
		\includegraphics[trim={0cm 0cm 0cm 0cm},clip,width=3.7cm,height= 3.2cm]{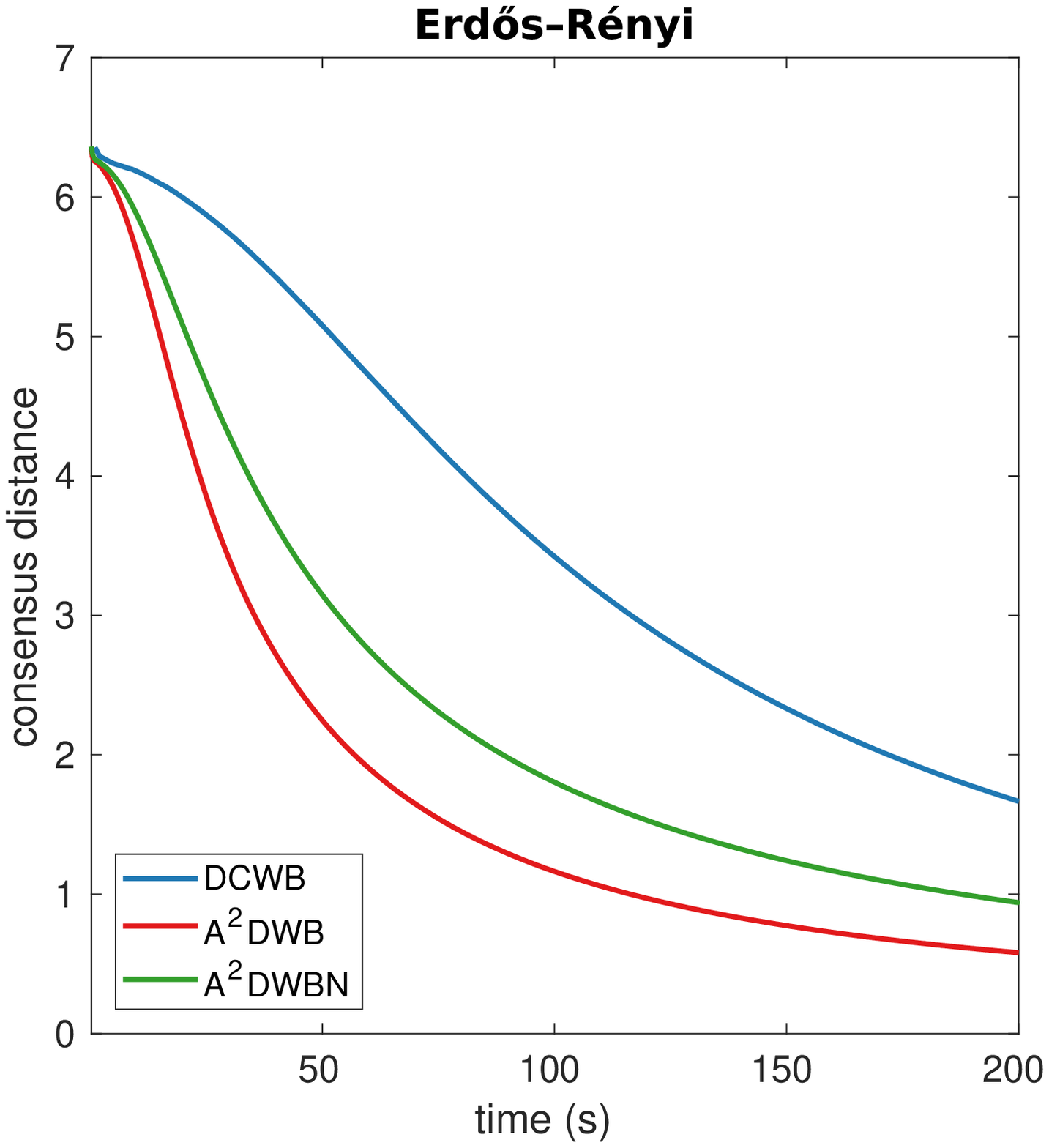}
	\end{subfigure}
	~
	\begin{subfigure}
		\centering
		\includegraphics[trim={0cm 0cm 0cm 0cm},clip,width=3.7cm,height= 3.2cm]{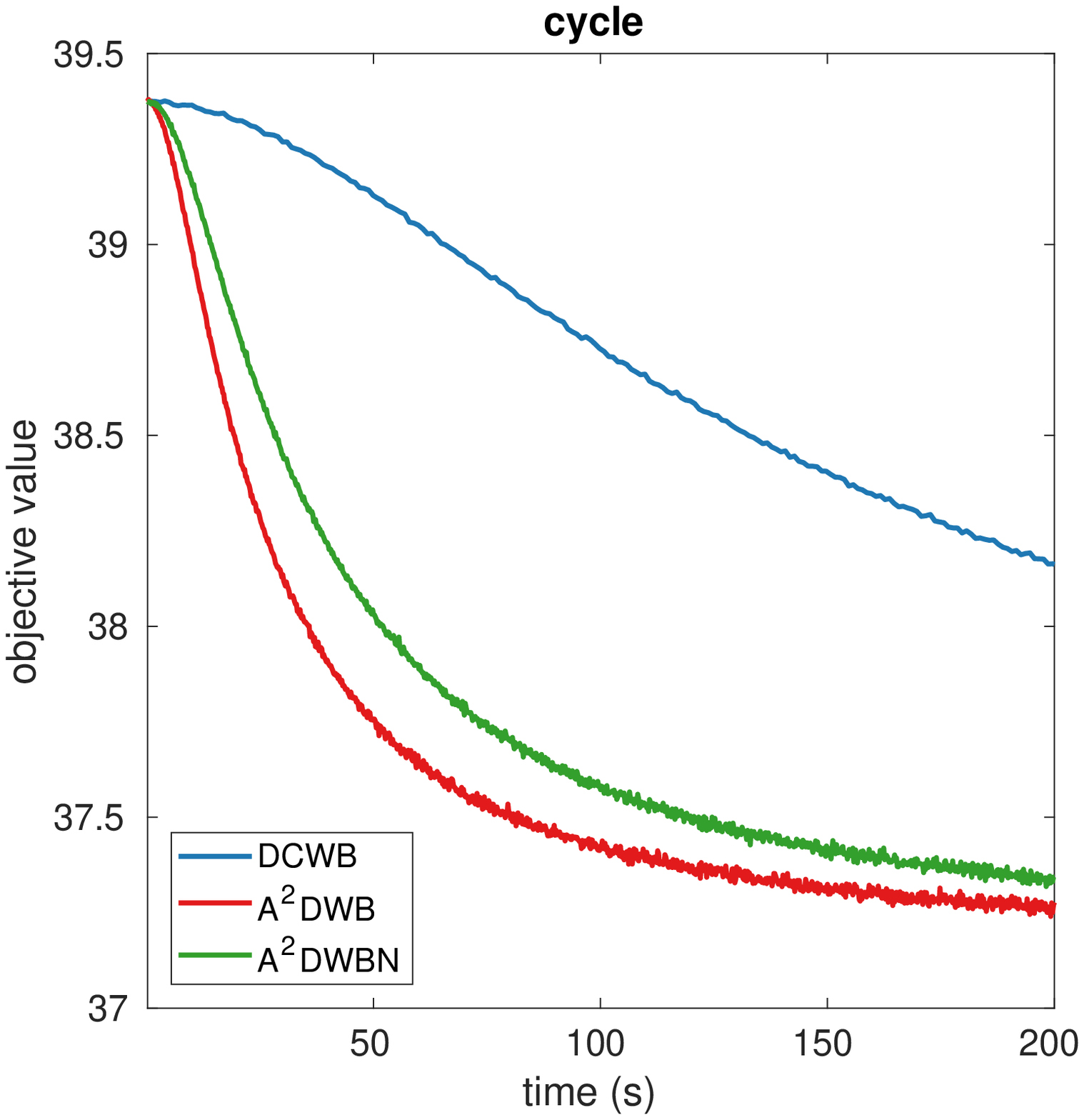}
	\end{subfigure}
	~
	\begin{subfigure}
		\centering
		\includegraphics[trim={0cm 0cm 0cm 0cm},clip,width=3.7cm,height= 3.2cm]{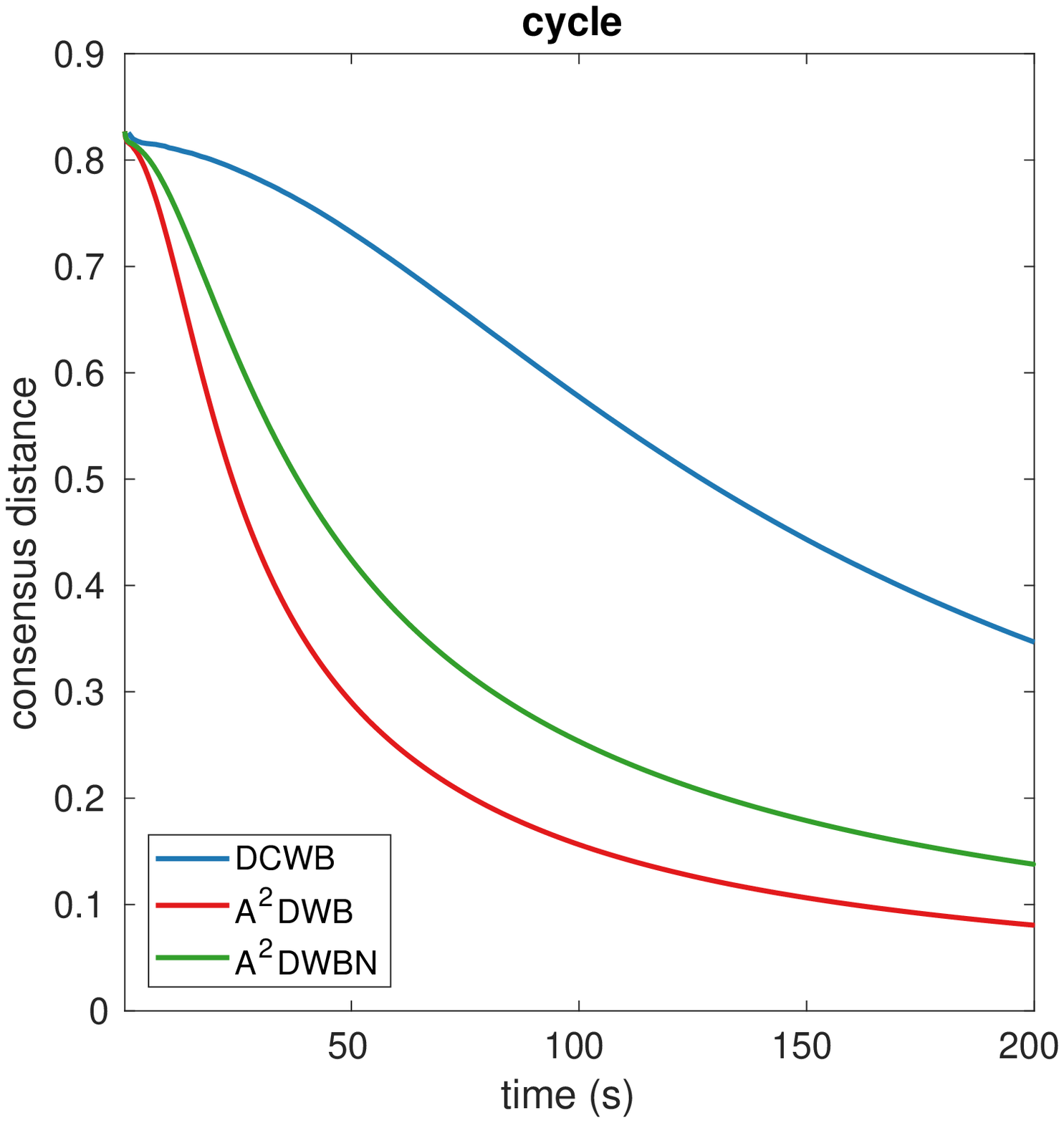}
	\end{subfigure}
	~
	\begin{subfigure}
		\centering
		\includegraphics[trim={0cm 0cm 0cm 0cm},clip,width=3.7cm,height= 3.2cm]{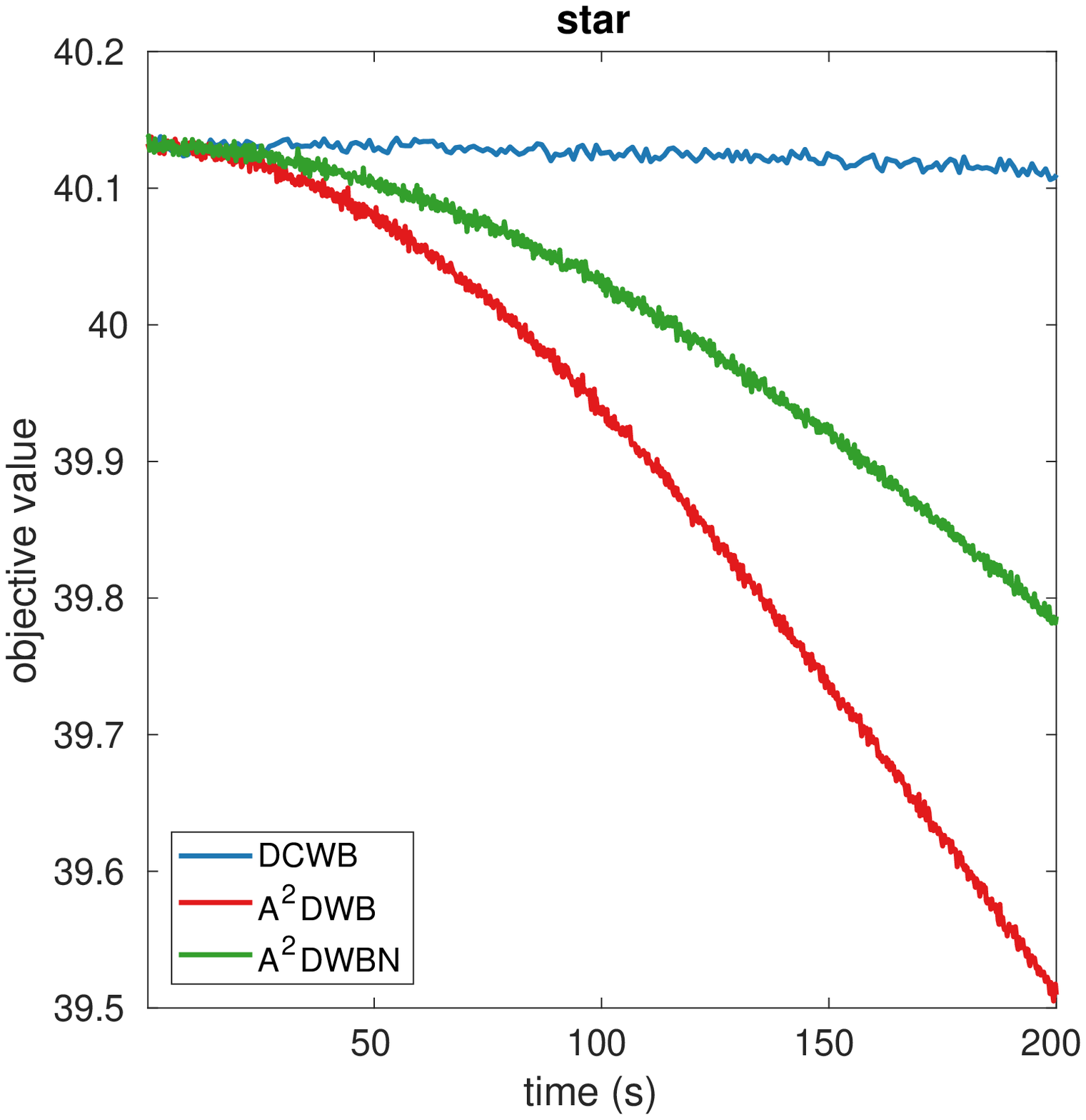}
	\end{subfigure}
	~
	\begin{subfigure}
		\centering
		\includegraphics[trim={0cm 0cm 0cm 0cm},clip,width=3.7cm,height= 3.2cm]{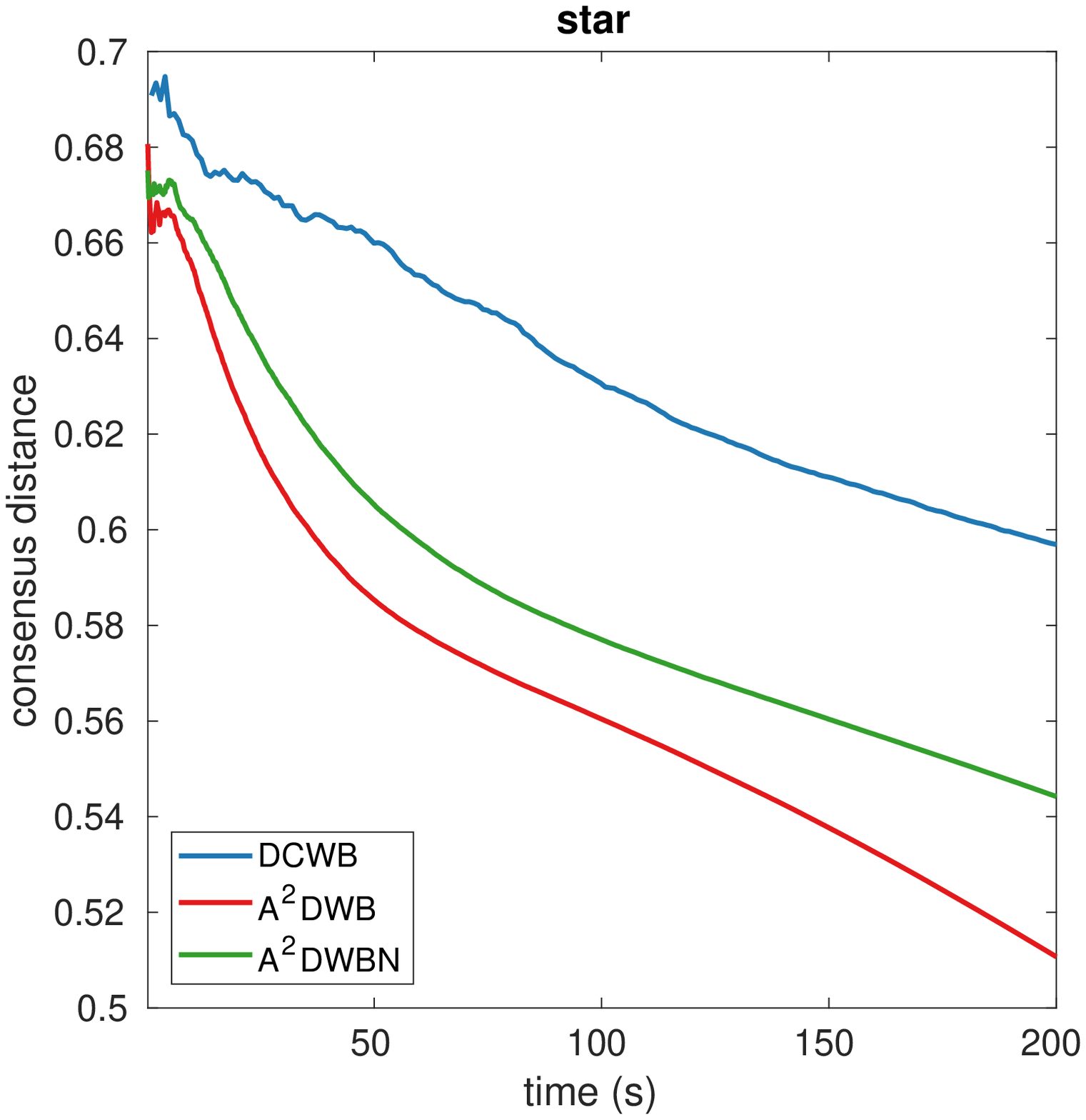}
	\end{subfigure}
	\caption{Simulation Experiment with Gaussian Distribution.From the top to the bottom, we sequentially report the results on complete, Erd\H{o}s-R\'{e}nyi, cycle and star networks.}
	\label{fig:exp2}
	\vskip -0.2in
\end{figure}

\subsection{MNIST Dataset}
In this experiment, we randomly choose 500 images of the same digit ($28 \times 28$ pixels)  from the MNIST data.
The pixel values of each image are normalized to add up to 1.
We assign each node one of the images and the objective is to jointly compute the Wasserstein Barycenter of the 500 samples present in the network. 
We run the experiment with digits 2,3,5, and 7 on all of the four topologies. 

The results of different digits on the same topology are similar.
Due to the limit of space, we only report part of the results in Figure \ref{fig:exp2}:
from the top to the bottom are the results of digit 2 on complete graph, digit 3 on Erd\H{o}s-R\'{e}nyi graph, digit 5 on cycle graph, and digit 7 on star graph, respectively.
Similar phenomenon as in experiment 1 has been observed in this task:
A$^2$DWB has better performance than the other two algorithms on both of the two comparison metrics, and the connectivity property of the underlying network topology effects the convergence of the algorithms.

\section{Conclusion}
In this paper, a practical asynchronous accelerated algorithm (A$^2$DWB) was proposed by applying a novel accelerated stochastic block coordinate descent algorithm (ASBCDS) to the dual of entropy regularized WBP.
Unlike its synchronous counterpart, A$^2$DWB needs no global synchronization, and thus substantially improve the time efficiency.
Theoretical analyses are provided for the proposed algorithms.
Empirical results validates the time efficiency of the A$^2$DWB compared to its synchronous counterpart.

\bibliographystyle{named}
\bibliography{ijcai19}

\end{document}


\section{Appendix of paper 3181 "An Asynchronous Decentralized Algorithm for Wasserstein Barycenter Problem"}
	\begin{theorem}
		Given a dual variable $\eta$ and its gradient $x = x^*(\sqrt{W}\eta)$ , the distance between $x$ and the optimal of the primal objective is bounded by $\|x - x^*\|^2 \le \frac{2}{\mu}(\varphi(\eta) - \varphi(\eta^*))$ and the consensus distance is bounded by
		$\|\sqrt{W} x\|^2 \le \frac{\lambda_{max}(W)}{\mu}(\varphi(\eta)-\varphi(\eta^*))$,
		where $x^*$ and $\eta^*$ denote the optimal solution to the primal and the dual problem,respectively.
	\end{theorem}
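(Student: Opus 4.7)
The plan is to exploit the $\mu$-strong convexity of the primal objective together with the first-order characterization of $x = x^*(\sqrt{W}\eta)$, which by definition minimizes the Lagrangian in its primal argument and therefore satisfies $\nabla F(x) = -\sqrt{W}\eta$, where $F$ denotes the primal objective. Strong duality and the primal feasibility $\sqrt{W}x^* = 0$ at the saddle point close the loop.

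For the first bound, I would write the $\mu$-strong-convexity inequality of $F$ between $x$ and $x^*$, substitute $\nabla F(x) = -\sqrt{W}\eta$, and use $\sqrt{W}x^* = 0$ so that the cross term collapses to $\eta^{\top}\sqrt{W}x$. The quantity $F(x) + \eta^{\top}\sqrt{W}x$ that emerges on the right is precisely the value of the dual function at $\eta$; strong duality identifies $F(x^*)$ with the optimal dual value, and matching the sign convention under which $\varphi(\eta) - \varphi(\eta^*) \ge 0$ immediately delivers $\|x - x^*\|^2 \le \tfrac{2}{\mu}(\varphi(\eta) - \varphi(\eta^*))$.

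For the second bound, the key observation is that $\sqrt{W}x^* = 0$ implies $Wx^* = 0$, so that
\[
\|\sqrt{W}x\|^2 \;=\; (x - x^*)^{\top}W(x - x^*) \;\le\; \lambda_{max}(W)\,\|x - x^*\|^2,
\]
after which chaining with the first inequality yields a consensus estimate of the advertised form. A cleaner parallel route is to observe that $\varphi$ is $L$-smooth with $L = \lambda_{max}(W)/\mu$, since $F$ is $\mu$-strongly convex and $\sqrt{W}$ has spectral norm $\sqrt{\lambda_{max}(W)}$, and then invoke the standard co-coercivity inequality $\|\nabla\varphi(\eta)\|^2 \le 2L(\varphi(\eta) - \varphi(\eta^*))$ together with the envelope-theorem identity $\nabla\varphi(\eta) = \sqrt{W}x$ (up to sign).

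The main obstacle is bookkeeping rather than analysis: one has to pin down the sign convention so that $\varphi$ is the quantity being minimized and $\varphi(\eta) - \varphi(\eta^*)$ is nonnegative, verify the gradient identity for the dual via Danskin's theorem, and track constants carefully, since both natural derivations of the second inequality leave a universal factor of two that must be absorbed into the paper's conventions to match the stated bound exactly.
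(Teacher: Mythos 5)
Your proposal matches the paper's argument in substance. For the first bound the paper does exactly what you describe: it writes $\varphi(\eta)=\langle\sqrt{W}\eta,x\rangle-F(x)$ and $\varphi(\eta^*)=-F(x^*)$ (using $\sqrt{W}x^*=0$), subtracts, and combines with the $\mu$-strong-convexity inequality for $F$ together with the first-order identity $\nabla F(x)=\sqrt{W}\eta$ (the paper's sign convention; your $\nabla F(x)=-\sqrt{W}\eta$ is the same fact under the opposite Lagrangian sign, and you correctly flag that this bookkeeping must be fixed). For the second bound the paper takes your ``cleaner parallel route'': $L$-smoothness of $\varphi$ with $L=\lambda_{\max}(W)/\mu$, the identity $\nabla\varphi(\eta)=\sqrt{W}x$, and $\nabla\varphi(\eta^*)=0$; your direct alternative $\|\sqrt{W}x\|^2=(x-x^*)^\top W(x-x^*)\le\lambda_{\max}(W)\|x-x^*\|^2$ chained with the first bound is equally valid and arguably more elementary, since it avoids establishing dual smoothness. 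Your remark about the stray factor of two is well taken and is not merely a convention issue: the standard smoothness inequality gives $\|\nabla\varphi(\eta)\|^2\le 2L(\varphi(\eta)-\varphi(\eta^*))$, and the direct route likewise yields $\tfrac{2\lambda_{\max}(W)}{\mu}$, so the constant $\tfrac{\lambda_{\max}(W)}{\mu}$ as stated does not follow from either derivation; the paper's own display (which also has a typo, $\|\varphi(\eta)-\varphi(\eta^*)\|^2$ where a gradient difference is meant) silently drops this factor.
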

	\begin{proof}
		As $x=x^*(\sqrt{W}\eta)$, then we have 
		\begin{align}
		&\varphi(\eta) = \langle \sqrt{W}\eta,x\rangle - F(x),\label{3}\\
		&\varphi(\eta^*) = \langle \sqrt{W}\eta^*,x^*\rangle - F(x^*) = -F(x^*)\label{4},
		\end{align}
		where we use the fact that $\sqrt{W}x^* =0$.
		
		Subtracting \ref{4} from \ref{3}, we obtain
		\begin{equation}\label{5}
		\varphi(\eta) - \varphi(\eta^*) = \langle \sqrt{W}\eta,x-x^*\rangle + F(x^*) -F(x),
		\end{equation}
		where we use again $\sqrt{W}x^* =0$.
		
		Also, according to the strongly convexity of $F(x)$, we can write
		\begin{align}\label{6}
		\|x - x^*\|^2 &\le \frac{2}{\mu}(F(x^*) -F(x) + \langle \nabla F(x),x - x^*\rangle)
		\nonumber \\
		&= \le \frac{2}{\mu}(F(x^*) -F(x) + \langle\sqrt{W}\eta ,x - x^*\rangle),
		\end{align}
		where the equality follows from the fact that $\nabla F(x) = \sqrt{W}\eta$, since $x=x^*(\sqrt{W}\eta)$.
		
		Combining (\ref{5}) and (\ref{6}) leads to 
		\begin{equation*}
		\|x-x^*\|^2 \le \frac{2}{\mu}(\varphi(\eta) - \varphi(\eta^*))
		\end{equation*}
		
		Also by the $L=\frac{\lambda_{max}(W)}{\mu}$-smooth of $\varphi(\eta)$,
		we have 
		\begin{equation*}
		\|\varphi(\eta) -\varphi(\eta^*)\|^2 \le L(\varphi(\eta) -\varphi(\eta^*) -\langle \nabla \varphi(\eta^*),\eta - \eta^*\rangle)).
		\end{equation*}
		Substituting $\nabla \varphi(\eta) = \sqrt{W} x$ and $\nabla \varphi(\eta^*) =0$, we can conclude 
		$$\|\sqrt{W} x\|^2 \le \frac{\lambda_{max}(W)}{\mu}(\varphi(\eta)-\varphi(\eta^*)).$$

	\end{proof}
	
	In the following lemma, we summarize the properties of the sequence $\theta_k$ in ASBCDS.
	\begin{lemma}\label{lemma:theta}
	Assume that $\theta_1 =  1/m$ and $\theta_{k+1} = \frac{\sqrt{\theta^4_k+4 \theta_k^2}-\theta_k^2}{2}$ for $k \ge 1$, then $\theta_k$ satisfies $\frac{1}{k-1+2m}\le \theta_k \le \frac{2}{k-1+2m}$ and $\frac{1-\theta_{k+1}}{\theta_{k+1}^2} = \frac{1}{\theta_k^2}$.
	\end{lemma}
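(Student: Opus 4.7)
My plan is to prove the algebraic identity first, since it is an immediate consequence of the recursive definition, and then use it to run an induction on $k$ for the two-sided bound. The identity will give me a clean one-step recursion on $1/\theta_k$ (rather than on $\theta_k$ itself), which is the natural variable for this kind of Nesterov-style sequence.

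For the identity $\frac{1-\theta_{k+1}}{\theta_{k+1}^2} = \frac{1}{\theta_k^2}$, I start from the recursion and rewrite it as $2\theta_{k+1} + \theta_k^2 = \sqrt{\theta_k^4 + 4\theta_k^2}$. Squaring both sides, the $\theta_k^4$ terms cancel and I am left with $4\theta_{k+1}^2 + 4\theta_{k+1}\theta_k^2 = 4\theta_k^2$. Dividing through by $4\theta_{k+1}^2 \theta_k^2$ gives exactly $\frac{1}{\theta_{k+1}^2} - \frac{1}{\theta_{k+1}} = \frac{1}{\theta_k^2}$, which rearranges to the claimed identity. I should briefly note that squaring is legal here because the right-hand side of the original recursion is manifestly positive (so $2\theta_{k+1}+\theta_k^2>0$ is automatic).

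Next I turn the identity into a linear-type inequality in $1/\theta_k$. Solving the quadratic $u^2-u = 1/\theta_k^2$ for $u = 1/\theta_{k+1}$ (taking the positive root) yields $\frac{1}{\theta_{k+1}} = \frac12 + \sqrt{\frac{1}{\theta_k^2}+\frac14}$. Using the elementary bound $a \le \sqrt{a^2+b^2} \le a+b$ for $a,b \ge 0$ with $a=1/\theta_k$ and $b=1/2$, I obtain the two-sided step bound
\begin{equation*}
\frac{1}{\theta_k}+\frac12 \;\le\; \frac{1}{\theta_{k+1}} \;\le\; \frac{1}{\theta_k}+1.
\end{equation*}

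With this step bound the induction is routine. The base case $k=1$ is just $\theta_1=1/m$, which lies in $[\frac{1}{2m},\frac{1}{m}]$. Assuming $\frac{k-1+2m}{2}\le \frac{1}{\theta_k} \le k-1+2m$, the upper step bound gives $1/\theta_{k+1} \le k+2m$ and the lower step bound gives $1/\theta_{k+1} \ge \frac{k+2m}{2}$, which is exactly the claim for $k+1$. The only step that requires a little care (and is the most error-prone spot) is the derivation of the identity: one has to be sure the recursion for $\theta_{k+1}$ produces a positive root so that squaring is reversible. Everything else is a straightforward algebraic manipulation plus a one-line induction.
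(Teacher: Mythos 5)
Your proof is correct, and while it establishes the same two facts by induction, the mechanics of the inductive step are genuinely different from the paper's. The paper works directly with the map $h(a)=\frac{\sqrt{a^4+4a^2}-a^2}{2}$, argues that $h$ is increasing on $(0,1)$, and then asserts (without showing the computation) that $h\bigl(\tfrac{1}{k-1+2m}\bigr)\ge\tfrac{1}{k+2m}$ and $h\bigl(\tfrac{2}{k-1+2m}\bigr)\le\tfrac{2}{k+2m}$; it obtains the recursion by solving the identity $\frac{1-\theta_{k+1}}{\theta_{k+1}^2}=\frac{1}{\theta_k^2}$ for $\theta_{k+1}$, which implicitly requires selecting the positive root. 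You instead pass to the reciprocal $u_k=1/\theta_k$, where the identity becomes $u_{k+1}^2-u_{k+1}=u_k^2$, and extract the uniform additive step bound $u_k+\tfrac12\le u_{k+1}\le u_k+1$ from $a\le\sqrt{a^2+b^2}\le a+b$; the two-sided bound then telescopes immediately from the base case $u_1=m$. This buys two things: it replaces the paper's unproved ``it can be verified'' endpoint inequalities with an explicit one-line estimate, and by deriving the identity \emph{from} the recursion (square, cancel $\theta_k^4$, divide by $4\theta_{k+1}^2\theta_k^2$) rather than the other way around, it sidesteps the root-selection issue entirely. One small remark: in that forward direction squaring an equality is always valid, so your worry about reversibility is unnecessary --- the only thing you actually need is $\theta_{k+1}>0$ (which holds since $\sqrt{\theta_k^4+4\theta_k^2}>\theta_k^2$ for $\theta_k\ne 0$) so that the division is legitimate.
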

	\begin{proof}
		If we denote $h(a) = \frac{\sqrt{a^4+4a^2}-a^2}{2}$ as a function of $a$, then we have $\nabla h(a) > 0$ for $a\in (0,1)$.
		Thus, if $\frac{1}{k-1+2m}\le \theta_k \le \frac{2}{k-1+2m}$, then $h(\frac{1}{k-1+2m})\le \theta_{k+1} \le h(\frac{2}{k-1+2m})$.
		It can be verified that $h(\frac{1}{k-1+2m})\ge\frac{1}{k+2m}$ and $h(\frac{2}{k-1+2m})\le\frac{2}{k+2m}$.
		Since $\theta_1= 1/m$ satisfies the inequality, then we can conclude the first part of the lemma by induction.
		As for the second part of the conclusion, if we treat $\theta_{k+1}$ as the unknown variable and solve the equation, we can obtain $\theta_{k+1} = \frac{\sqrt{\theta^4_k+4 \theta_k^2}-\theta_k^2}{2}$.
	\end{proof}
	
	\begin{theorem}
	Under the assumption that $\varphi(\eta)$ is $L$-smooth, the stochastic gradient estimation is bounded as $\E \| \nabla \varphi(\omega_{j(k+1)})- \nabla \phi(\omega_{j(k+1)},\xi_{k+1})\|^2  \le \frac{mL\theta_{k+1}\epsilon}{8}$, and the delay $\tau \le m$, then for Algorithm ASBCDS, if step size $\gamma$ satisfies $3 L \gamma + 12 L \gamma(\frac{\tau^2+\tau }{m}+2\tau)^2 \le 1$, we have $\varphi(\eta_k) -\varphi(\eta^*) \le \epsilon$ after $K = \frac{\sqrt{2m^2 (\varphi(\eta_0) - \varphi(\lambda^*)+\|\zeta_0-\lambda^*\|^2/(2\gamma))}}{\sqrt{\epsilon}} =\OM(\frac{m\tau\sqrt{L}}{\sqrt{\epsilon}})$ iteration.
	Furthermore, if we assume $\E \| \nabla \varphi(\lambda)- \nabla \phi(\lambda,\xi)\|^2  \le\sigma^2$ and sample $M_k$ mini-batch of samples in the $k$-th iteration to satisfy the assumption on $\nabla \phi(\omega_{j(k+1)},\xi_{k+1})$, then the total stochastic oracle access is bounded by $K +1+ \frac{16\sigma^2 (K+2m)^2}{Lm\epsilon}=\OM(\frac{m\tau L}{\sqrt{\epsilon}}+\frac{m\tau^2\sigma^2}{\epsilon^2})$.
	\end{theorem}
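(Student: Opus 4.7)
The plan is to carry out a Lyapunov-style analysis in the accelerated block coordinate framework, while explicitly absorbing the two sources of perturbation: the stale iterate $\omega_{j(k+1)}$ (delay $\tau$) and the stochastic gradient error. Throughout I will use the dual $L$-smoothness of $\varphi$ established in Theorem~1 with $L=\lambda_{\max}(W)/\mu$, and the two algebraic identities about $\theta_k$ from Lemma~\ref{lemma:theta}, in particular $(1-\theta_{k+1})/\theta_{k+1}^2=1/\theta_k^2$, which is the device that makes the telescoping work.

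First I would set up the candidate potential
\[
\Phi_k \;=\; \frac{1}{\theta_k^2}\bigl(\varphi(\eta_k)-\varphi(\eta^*)\bigr)+\frac{1}{2\gamma}\|\zeta_k-\eta^*\|^2,
\]
and write the standard descent inequality coming from $L$-smoothness of $\varphi$ at the anchor point $\omega_{j(k+1)}$, i.e.\ the usual three-term expansion into (i) a descent term along $\nabla\varphi(\omega_{j(k+1)})$, (ii) the quadratic $\tfrac{L}{2}\|\eta_{k+1}-\omega_{j(k+1)}\|^2$, and (iii) a cross inner product with the stochastic error $e_{k+1}:=\nabla\varphi(\omega_{j(k+1)})-\nabla\phi(\omega_{j(k+1)},\xi_{k+1})$. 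The variance assumption bounds $\E\|e_{k+1}\|^2\le mL\theta_{k+1}\epsilon/8$, so after taking conditional expectations the cross term is absorbed using a Young/AM-GM split, and its residual fits the $\epsilon$-budget because it is scaled by $\theta_{k+1}$ (this is the point of making the noise shrink along the schedule).

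Next comes the only delicate step, the delay. I would expand $\omega_{j(k+1)}-\omega_{k+1}$ as a telescoping sum over the at most $\tau$ missing updates, each of the form $\theta_s\gamma\,\nabla\phi(\omega_{j(s)},\xi_s)$ on the relevant block. Using $\tau\le m$ and $\theta_k\le 2/(k-1+2m)$, this produces the factor $\tau^2/m+2\tau$ that appears in the step-size condition; after applying smoothness again to replace $\nabla\varphi(\omega_{j(k+1)})$ by $\nabla\varphi(\omega_{k+1})$ plus an error controlled by $L\|\omega_{j(k+1)}-\omega_{k+1}\|^2$, the hypothesis
\[
3L\gamma+12L\gamma\bigl(\tfrac{\tau^2+\tau}{m}+2\tau\bigr)^{\!2}\le 1
\]
is exactly what is needed for the quadratic terms to be dominated by the descent term. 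This is the step I expect to be the main obstacle, because one must be careful that the delayed stochastic gradients do not destroy the martingale structure; the standard trick is to condition on the filtration up to iteration $k+1-\tau$ and treat the later randomness as deterministic perturbations when bounding the gap $\omega_{j(k+1)}-\omega_{k+1}$.

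Once the per-iteration inequality $\E\Phi_{k+1}\le \E\Phi_k+(\text{residual of order }\theta_{k+1}\epsilon)$ is established, I would telescope from $0$ to $K-1$, use $\sum_k\theta_k\le O(\ln K)$ or, more tightly, $\theta_k/\theta_k^2=1/\theta_k\le(k+2m)/2$ to keep the residual at $O(\epsilon)$, and then invert $\theta_K^{-2}\ge (K-1+2m)^2/4$ to obtain
\[
\E[\varphi(\eta_K)-\varphi(\eta^*)]\le \frac{4}{(K-1+2m)^2}\Phi_0+\tfrac{\epsilon}{2},
\]
from which the stated $K=O(m\tau\sqrt{L/\epsilon})$ iteration count follows by solving for $K$. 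Finally, for the oracle bound I would realize the variance requirement by mini-batching with $M_k=\lceil 8\sigma^2/(mL\theta_{k+1}\epsilon)\rceil\le 1+4\sigma^2(k+2m)/(mL\epsilon)$, sum from $k=0$ to $K$ to get the total sample count $K+1+\sum_k M_k$, and recognize $\sum_{k\le K}(k+2m)\le (K+2m)^2/2$, which yields the announced $16\sigma^2(K+2m)^2/(Lm\epsilon)$ and, substituting $K=O(m\tau\sqrt{L/\epsilon})$, the $O(m\tau^2\sigma^2/\epsilon^2)$ sample complexity.
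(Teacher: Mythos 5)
Your sketch follows essentially the same route as the paper's proof: the weighted potential $\theta_k^{-2}(\varphi(\eta_k)-\varphi(\eta^*))+\tfrac{m^2}{2\gamma}\|\zeta_k-\eta^*\|^2$, Young-type splits for the staleness and stochastic-noise cross terms, a telescoping bound on the delay gap yielding the factor $(\tfrac{\tau^2+\tau}{m}+2\tau)^2$ that drives the step-size condition, the identity $(1-\theta_{k+1})/\theta_{k+1}^2 = 1/\theta_k^2$ from Lemma~\ref{lemma:theta} to telescope, the budget $\sum_k \theta_{K+1}^2/\theta_{k+1}\le 8$ for the noise residual, and the same mini-batch sizing $M_k \approx 8\sigma^2/(mL\theta_{k+1}\epsilon)$ for the oracle count. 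The one place your sketch understates the work is the delay step: the gap $\lambda_{k+1}-\omega_{j(k+1)}$ is not simply a sum of bare gradient steps $\theta_s\gamma\nabla\phi$ but must be unwound through the momentum recursion into the per-block differences $\eta_h^{[p]}-\lambda_h^{[p]}$ with coefficients $1+e_h+e_h\sum_i b(h,i)$ (controlled via $e_h\le 1/m$, $b\le 1$, $k+1-jp(k+1)\le\tau$, and the weight shift $\theta_{k+1}^{-2}\le 16\,\theta_{k+1-i}^{-2}$ which itself needs $\tau\le m$) --- though this bookkeeping does produce exactly the constant you anticipated, so the approach is the same and no essential idea is missing.
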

	\begin{proof}

{\bf Step 1:}
	According to the update rule in ASBCDS, we have
	\begin{displaymath}
	\theta_{k+1} \zeta_k =\lambda_{k+1}-  (1-\theta_{k+1}) \eta_k ,
	\end{displaymath}
	and 
	\begin{align*}
	\eta_{k+1} &= \lambda_{k+1} + m \theta_{k+1}(\zeta_{k+1}-\zeta_k)
	\\
	&=(1-\theta_{k+1})\eta_{k} + m \theta_{k+1}\zeta_{k+1} -(m-1) \theta_{k+1}\zeta_k.
	\end{align*}
	Combing these together and eliminating $\zeta_k$, 
	we have 
	\begin{align*}
	m\theta_{k+1} \zeta_{k+1} = &\eta_{k+1} -(1-\theta_{k+1})\eta_k  +(m-1)\lambda_{k+1}
	\\& -(m-1)(1-\theta_{k+1})\eta_k.
	\end{align*}
	Then we have for $k \ge 1$
	\begin{displaymath}
	\frac{\lambda_{k+1} -(1-\theta_{k+1})\eta_k}{\theta_{k+1}} = \frac{\eta_k -m(1-\theta_k)\eta_{k-1}+(m-1)\lambda_{k}}{m\theta_k}.
	\end{displaymath}
	Thus, we have for each block $p \in \{1,\cdots,m\}$,
	\begin{align}\label{lamda-eta}
	\lambda_{k+1}^{[p]} =& \eta_k^{[p]} -\theta_{k+1}\eta_k^{[p]}  + \frac{\theta_{k+1}\eta_k^{[p]} }{m\theta_k}
	-\frac{\theta_{k+1}(1-\theta_k)\eta_{k-1}^{[p]} }{\theta_k} \nonumber\\
	&+\frac{(m-1)\theta_{k+1}\lambda_k^{[p]} }{m\theta_k}
	\nonumber\\
	=& \eta_k^{[p]} -\frac{\theta_{k+1}}{\theta_k}(\frac{1}{m} - \theta_k) (\eta_k^{[p]}  -\lambda_k^{[p]} )
	\nonumber\\
	&+\frac{\theta_{k+1}(1-\theta_k)}{\theta_k}(\lambda_{k}^{[p]}  -\eta_{k-1}^{[p]} ).
	\end{align}
	By setting $d_k =\frac{\theta_{k+1}(1-\theta_k)}{\theta_k}$ ,$e_k = \frac{\theta_{k+1}}{\theta_k}(\frac{1}{m} - \theta_k)$ and $b(h,k) = \prod_{i=h}^k d_k$, we have for $k  \ge jp(k+1) \ge 1$
	\begin{align*}
	\lambda_{k+1}^{[p]} =& \eta_k^{[p]} + e_k(\eta_k^{[p]}-\lambda_k^{[p]})+d_k(\lambda_k^{[p]}-\eta_{k-1}^{[p]})
	\\
	=& \lambda_k^{[p]} + (e_k+1)(\eta_k^{[p]}-\lambda_k^{[p]})+d_k e_{k-1}(\eta_{k-1}^{[p]}-\lambda_{k-1}^{[p]}) 
	\\
	&+d_kd_{k-1}(\lambda_{k-1}^{[p]}-\eta_{k-2}^{[p]})
	\\
	=& \lambda_k^{[p]} + (e_k+1)(\eta_k^{[p]}-\lambda_k^{[p]})
	\\
	&+\sum_{i=jp(k+1)}^{k-1}b(i+1,k)e_i (\eta_{i}^{[p]}-\lambda_{i}^{[p]}) 
	\\
	&+b(jp(k+1),k)(\lambda_{jp(k+1)}^{[p]}-\eta_{jp(k+1)-1}^{[p]})
	\end{align*}
	
	Making $l =k$ and summing the above inequality from $l = jp(k+1)$ to $k$, we have
	\begin{small}
	\begin{align*}
	\lambda_{k+1}^{[p]} =& \eta_{jp(k+1)}^{[p]} + \sum_{l=jp(k+1)}^k (e_l+1)(\eta_{l}^{[p]}-\lambda_l^{[p]})
	\\
	&+\sum_{l=jp(k+1)+1}^k\sum_{i=jp(k+1)}^{l-1}b(i+1,l)e_i (\eta_{i}^{[p]}-\lambda_{i}^{[p]}) 
	\\
	&+\sum_{l = jp(k+1)}^k b(jp(k+1),i)(\lambda_{jp(k+1)}^{[p]}-\eta_{jp(k+1)-1}^{[p]})
	\\
	=& \eta_{jp(k+1)}^{[p]} + \sum_{l=jp(k+1)}^k (e_l+1)(\eta_{l}^{[p]}-\lambda_l^{[p]})
	\\
	&+\sum_{i=jp(k+1)}^{k}(\sum_{l=i+1}^{k}b(i+1,l)e_i )(\eta_{i}^{[p]}-\lambda_{i}^{[p]}) 
	\\
	&+\sum_{l = jp(k+1)}^k b(jp(k+1),i)(\lambda_{jp(k+1)}^{[p]}-\eta_{jp(k+1)-1}^{[p]})
	\end{align*}
	\end{small}
	
	By summing over all the blocks, we have
	\begin{align}\label{eqn:lambda-omega}
	\lambda_{k+1} -\omega_{j(k+1)} = & \sum_{p=1}^{m} \sum_{h = jp(k+1)}^k(1+e_h+e_h\sum_{i=h+1}^k b(h,i))
	\nonumber\\
	&([\eta_h^{[p]}]-[\lambda_h^{[p]}]).
	\end{align} ,

	Then we have
	\begin{small}
	\begin{align}\label{eqn:omeganormp}
	&\|\lambda_{k+1} -\omega_{j(k+1)} \|^2
	\nonumber\\
	= &\sum_{p=1}^{m}  \|\sum_{h = jp(k+1)}^k(1+e_h+e_h\sum_{i=h+1}^k b(h,i))(\eta_h^{[p]}-\lambda_h^{[p]})\|^2
	\nonumber\\
	\le&\sum_{p=1}^{m} (\sum_{h = jp(k+1)}^k(1+e_h+e_h\sum_{i=h+1}^k b(h,i))) 
	\nonumber\\
	&\cdot\sum_{h = jp(k+1)}^k(1+e_h+e_h\sum_{i=h+1}^k b(h,i))\|\eta_h^{[p]}-\lambda_h^{[p]}\|^2
	\nonumber\\
	\le&\sum_{p=1}^{m} (\sum_{h = jp(k+1)}^k(1+e_h\sum_{i=1}^{k-h+1} 1)) 
	\nonumber\\
	&\cdot\sum_{h = jp(k+1)}^k(1+e_h\sum_{i=1}^{k-h+1} 1)\|\eta_h^{[p]}-\lambda_h^{[p]}\|^2
	\nonumber\\
	\le&\sum_{p=1}^{m} (\sum_{ii=1}^{k- jp(k+1)+1}(1+\frac{1}{m}\sum_{i=1}^{ii} 1)) 
	\nonumber\\
	&\cdot\sum_{ii = 1}^{k-jp(k+1)+1}(1+\frac{1}{m}\sum_{i=1}^{ii} 1)\|\eta_{k-ii+1}^{[p]}-\lambda_{k-ii+1}^{[p]}\|^2
	\nonumber\\
	\le&\sum_{p=1}^{m} (\sum_{ii=1}^{\min(k-\tau,\tau)}(1+\frac{1}{m}\sum_{i=1}^{ii} 1)) 
	\nonumber\\
	&\cdot\sum_{ii = 1}^{\tau}(1+\frac{1}{m}\sum_{i=1}^{ii} 1)\|\eta_{k-ii+1}^{[p]}-\lambda_{k-ii+1}^{[p]}\|^2
	\nonumber\\
	\le&\sum_{p=1}^{m} (\frac{\tau^2+\tau}{2m} + \tau) \sum_{i=1}^{\min(k-\tau,\tau)}(1+\frac{i}{m})
	\|\eta_{k-i+1}^{[p]}-\lambda_{k-i+1}^{[p]}\|^2,
	\end{align}
	\end{small}
	where 
	the first is due to the Cauchy-Schwartz inequality;
	in the second inequality, we use $b(h,i) \le 1$;
	in the third inequality, we change variable $ii = k-h+1$ and use $e_h \le \frac{1}{m}$;
	the forth inequality follows from the fact that $k+1 - jp(k+1) \le \tau$. 
	
	Dividing $\theta_{k+1}^2$ of both side of (\ref{eqn:omeganormp}) and summing from $k=0$ to $K$, we have 
	\begin{align}\label{eqn:omeganorm}
	&\sum_{k=0}^K \frac{1}{\theta_{k+1}^2} \|\lambda_{k+1} - \omega_{j(k+1)}\|^2
	\nonumber\\
	\le &\sum_{p=1}^{m} (\frac{\tau^2+\tau}{2m} + \tau)\sum_{k=0}^K \sum_{i=1}^{\min(\tau,k-\tau)}\frac{1+\frac{i}{m}}{\theta_{k+1}^2} 
	\|\eta_{k-i+1}^{[p]}-\lambda_{k-i+1}^{[p]}\|^2
	\nonumber\\
	\le& \sum_{p=1}^{m} (\frac{\tau^2+\tau}{2m} + \tau)
	\nonumber\\
	&\cdot\sum_{k=0}^K \sum_{i=1}^{\min(\tau,k-\tau)}\frac{16(1+\frac{i}{m})}{\theta_{k-i+1}^2} 
	\|\eta_{k-i+1}^{[p]}-\lambda_{k-i+1}^{[p]}\|^2
	\nonumber\\
	\le&4(\frac{\tau^2+\tau}{m} + 2\tau)^2\sum_{k=0}^K \frac{1}{\theta_{k+1}^2}\|\eta_{k+1}-\lambda_{k+1}\|^2
	\end{align}
	where the  second inequality follows from the fact that $\frac{1}{\theta_{k+1}^2} \le (k+2n)^2 \le 4(k-i+2n)^2 \le \frac{16}{\theta_{k+1-i}}$ according to Lemma \ref{lemma:theta} and $\tau \le n$,
	and the third is due to the fact that each $\frac{1}{\theta_{k+1}^2}\|\eta_{k+1}^{[p]}-\lambda_{k+1}^{[p]}\|^2$ appears at most $\tau$ times with coefficient from $1 + \frac{1}{m}$ to $1+\frac{\tau}{m}$.

{\bf Step2:}	
	From the smoothness of $\varphi$, we have
	\begin{align}\label{eta-oneiter}
	&\varphi(\eta_{k+1})
	\nonumber\\
	& \le \varphi(\lambda_{k+1}) +\langle \nabla \varphi(\lambda)^{[i_{k}]} ,\eta_{k+1}^{[i_{k}]}-\lambda_{k+1}^{[i_{k}]}\rangle + \frac{L}{2}\|\eta_{k+1}^{[i_{k}]}-\lambda_{k+1}^{[i_{k}]}\|^2
	\nonumber\\
	&\le\varphi(\lambda_{k+1}) +\frac{L}{2}\|\eta_{k+1}^{[i_{k}]}-\lambda_{k+1}^{[i_{k}]}\|^2 
	\nonumber\\
	&+ \langle \nabla \varphi(\lambda_{k+1})^{[i_{k}]} -\nabla \varphi(\omega_{j(k+1)})^{[i_{k}]},\eta_{k+1}^{[i_{k}]}-\lambda_{k+1}^{[i_{k}]}\rangle
	\nonumber\\
	& + \langle \nabla \varphi(\omega_{j(k+1)})^{[i_{k}]} - \nabla\phi(\omega_{j(k+1)},\xi_{k+1})^{[i_{k}]} ,\eta_{k+1}^{[i_{k}]}-\lambda_{k+1}^{[i_{k}]}\rangle
	\nonumber\\
	&+  \langle\nabla \phi(\omega_{j(k+1)},\xi_{k+1}) ^{[i_{k+1}]},\eta_{k+1}^{[i_{k+1}]}-\lambda_{k+1}^{[i_{k+1}]}\rangle
	\end{align}
	
	For the last three term of the r.h.s., we have
	\begin{align}\label{lh-1}
	&\langle \nabla  \varphi(\lambda_{k+1})^{[i_k]} -\nabla \varphi(\omega_{jl(k+1)})^{[i_k]} ,\eta_{k+1}^{[i_k]} -\lambda_{k+1}^{[i_k]} \rangle
	\nonumber\\
	&\le\frac{\gamma L_l^2}{2D_1}\|\lambda_{k+1}-\omega_{jl(k+1)}\|^2 + \frac{\gamma D_1 }{2}\|\frac{\lambda_{k+1} ^{[i_k]} -\eta_{k+1}^{[i_k]} }{\gamma}\|^2,
	\end{align}
	
	\begin{align}\label{lh-2}
	& \langle \nabla \varphi(\omega_{jl(k+1)})^{[i_k]}  - \nabla\phi(\omega_{j1(k+1)},\xi_{k+1})^{[i_k]}  ,\eta_{k+1}^{[i_k]} -\lambda_{k+1}^{[i_k]} \rangle
	\nonumber\\
	&\le \frac{\gamma }{2D_2}\| \nabla\varphi(\omega_{jl(k+1)})^{[i_k]}  - \nabla \phi(\omega_{j1(k+1)},\xi_{k+1})^{[i_k]} \|^2 
	\nonumber\\
	&+ \frac{\gamma D_2 }{2}\|\frac{\lambda_{k+1}^{[i_k]}  -\eta_{k+1}^{[i_k]}}{\gamma} \|^2,
	\end{align}
	
	and
	
	\begin{align}\label{lh-3}
	&\langle\nabla\phi(\omega_{j(k+1)},\xi_{k+1})^{[i_k]}  ,\eta_{k+1}^{[i_k]} -\lambda_{k+1}^{[i_k]} \rangle
	\nonumber\\
	&=\langle g_{k+1},\eta_{k+1}^{[i_k]}-\lambda_{k+1}^{[i_k]}\rangle
	\nonumber\\
	&=-\frac{\|\eta_{k+1}^{[i_k]}-\lambda_{k+1}^{[i_k]}\|^2}{\gamma}
	\end{align}
	
	By substituting (\ref{lh-1}),(\ref{lh-2}) and (\ref{lh-3}) into (\ref{eta-oneiter}),
	we have 
	\begin{align}
	&\varphi(\eta_{k+1})
	\nonumber\\
	&\le\varphi(\lambda_{k+1}) -\gamma(1-\frac{L\gamma}{2})\|\frac{\eta_{k+1}^{[i_k]}-\lambda_{k+1}^{[i_k]}}{\gamma}\|^2 
	\nonumber\\
	&+ \frac{\gamma }{2D_2}\| \nabla \varphi(\omega_{j(k+1)}) ^{[i_k]}- \nabla \phi(\omega_{j(k+1)},\xi_{k+1})^{[i_k]}\|^2 
	\nonumber\\
	&+ \frac{\gamma (D_1+D_2 )}{2}\|\frac{\lambda_{k+1}^{[i_k]} -\eta_{k+1}^{[i_k]}}{\gamma}\|^2
	\nonumber\\
	&+  \frac{\gamma L^2}{2D_1}\|\lambda_{k+1}-\omega_{j(k+1)}\|^2 
	\nonumber
	\end{align}
	
	Taking expectation on $i_k$, we have 
	\begin{align}\label{eqn:phi}
	&\E_{i_k}\varphi(\eta_{k+1})
	\nonumber\\
	&\le\varphi(\lambda_{k+1})+  \frac{\gamma L^2}{2D_1}\|\lambda_{k+1}-\omega_{j(k+1)}\|^2 
	\nonumber\\
	&+ \frac{\gamma }{2mD_2}\| \nabla \varphi(\omega_{j(k+1)})- \nabla \phi(\omega_{j(k+1)},\xi_{k+1})\|^2  
	\nonumber\\
	&-\gamma(1-\frac{L\gamma}{2}-\frac{D_1+D_2}{2})\E_{i_k}\|\frac{\eta_{k+1}^{[i_k]}-\lambda_{k+1}^{[i_k]}}{\gamma}\|^2 
	\end{align}
	
{\bf Step 3:}	
	We have 
	\begin{align}
	&\quad \frac{m^2}{2\gamma}\|\theta_{k+1}\zeta_{k+1} - \theta_{k+1}\eta^*\|^2 
	\nonumber\\
	&= \frac{m^2}{2\gamma}\|\theta_{k+1}\zeta_{k+1} - \theta_{k+1}\zeta_{k}+ \theta_{k+1}\zeta_{k} -\theta_{k+1}\eta^*\|^2 
	\nonumber\\
	&= \frac{m^2}{2\gamma}\|\theta_{k+1}\zeta_{k+1}^{[i_k]}-\theta_{k+1}\zeta_{k}^{[i_k]}\|^2 +\frac{m^2}{2\gamma}\|\theta_{k+1}\zeta_k -\theta_{k+1}\eta^* \|^2 
	\nonumber\\
	&\quad -m\langle  g_{k+1},\theta_{k+1}\zeta_k^{[i_k]} -\theta_{k+1}[\eta^*]^{[i_k]}\rangle.
	\nonumber
	\end{align}
	
	Taking expectation on $i_k$, we have
	\begin{align}\label{eqn:zeta}
	&\quad \frac{m^2}{2\gamma}\E_{i_k}\|\theta_{k+1}\zeta_{k+1} - \theta_{k+1}\eta^*\|^2 
	\nonumber\\
	&= \frac{1}{2\gamma}\E_{i_k}\|\eta_{k+1}^{[i_k]} - \lambda_{k+1}^{[i_k]}\|^2 +\frac{m^2}{2\gamma}\|\theta_{k+1}\zeta_k -\theta_{k+1}\eta^* \|^2 
	\nonumber\\
	&\quad \langle \nabla \varphi(\omega_{j(k+1)},\xi_{k+1}),\theta_{k+1}\zeta_k -\theta_{k+1}\eta^*\rangle.
	\end{align}

	If we further take expectation on $\xi_{k+1}$,then we have
	\begin{align}\label{eqn:prod}
	&\E -\langle \nabla \phi(\omega_{j(k+1)},\xi_{k+1}),\theta_{k+1}\zeta_k -\theta_{k+1}\eta^*\rangle
	\nonumber\\
	&=\E-\langle \nabla \phi(\omega_{j(k+1)},\xi_{k+1}),\lambda_{k+1}-(1-\theta_{k+1})\eta_{k}-
	\theta_{k+1}\eta^*\rangle
	\nonumber\\
	&=-\langle \nabla \varphi(\omega_{j(k+1)}),\lambda_{k+1}-(1-\theta_{k+1})\eta_{k}-
	\theta_{k+1}\eta^*\rangle
	\nonumber\\
	&=-\langle \nabla \varphi(\omega_{j(k+1)}),\omega_{jl(k+1)}-(1-\theta_{k+1}) \eta_{k}-
	\theta_{k+1}\eta^*\rangle 
	\nonumber\\
	&-\langle \nabla \varphi(\omega_{j(k+1)} , \lambda_{k+1} - \omega_{j(k+1)}  )
	\nonumber\\
	&\le (1-\theta_{k+1})\varphi(\eta_{k}) + \theta_{k+1} \varphi(\eta^*)- \varphi(\omega_{j(k+1)})
	\nonumber\\
	&-\langle \nabla \varphi(\omega_{j(k+1)} , \lambda_{k+1} -  \omega_{j(k+1)}  )
	\nonumber\\
	&\le (1-\theta_{k+1})\varphi(\eta_{k}) + \theta_{k+1} \varphi(\eta^*)- \varphi(\lambda_{k+1})
	\nonumber\\
	&+\langle \nabla \varphi(\lambda_{k+1}) -\nabla \varphi(\omega_{j(k+1)} ),\lambda_{k+1} -  \omega_{jl(k+1)}  ),
	\end{align}
	where the expectation is taken on $\xi_{k+1}$ and the first equality follows from the line (3) in ASBCDS,the second equality is due to the independence of $\nabla \phi(\omega_{jl(k+1)},\xi_{k+1})$ and  $\omega_{jl(k+1)}-(1-\theta_{k+1}) \eta_{k}-
	\theta_{k+1}\eta^*$.
	The first and second inequality just follows from the convexity of $\varphi$.
	
	{\bf Step 4:}
	By summing up (\ref{eqn:phi})(\ref{eqn:zeta}) and (\ref{eqn:prod}) and taking expectation on $\xi_{k+1}$, we have
	\begin{align}\label{eqn:varphi-oneiter}
	&\E_{i_{k+1}} \varphi(\eta_{k+1}) \le (1-\theta_{k+1}) \varphi(\eta_k) + \theta_{k+1} \varphi(\eta^*) 
	\nonumber\\
	&-\gamma(\frac{1}{2}-\frac{L\gamma}{2}-\frac{D_1+D_2}{2})\E_{\xi_{k+1}}[\E_{i_k}\|\frac{\eta_{k+1}-\lambda_{k+1}}{\gamma}\|^2 ]
	\nonumber\\
	&+ \frac{\gamma }{2mD_2}\E_{\xi_{k+1}} \| \nabla \varphi(\omega_{j(k+1)})- \nabla \phi(\omega_{j(k+1)},\xi_{k+1})\|^2 
	\nonumber\\
	&+  (\frac{\gamma L^2}{2D_1}+L)\|\lambda_{k+1}-\omega_{j(k+1)}\|^2 +\frac{m^2}{2\gamma}\|\theta_{k+1}\zeta_k -\theta_{k+1}\eta^* \|^2 
	\nonumber\\
	&-\E_{\xi_{k+1}}[\frac{m^2}{2\gamma}\E_{i_k}\|\theta_{k+1}\zeta_{k+1} - \theta_{k+1}\eta^*\|^2 ],
	\end{align}
	where we use the fact that $\|\eta_{k+1}-\lambda_{k+1} \|^2= \|\eta_{k+1}^{i_k}-\lambda_{k+1}^{i_k}\|^2$
	
	Dividing $\theta_{k+1}^2$ on both side of (\ref{eqn:varphi-oneiter}), we have
	\begin{align}\label{eqn:varphi-zeta}
	&\frac{\E \varphi(\eta_{k+1}) -\varphi(\eta^*)}{\theta_{k+1}^2} +\frac{m^2}{2\gamma}\E\|\zeta_{k+1} - \eta^*\|^2
	\nonumber\\
	\le &\frac{1-\theta_{k+1}}{\theta_{k+1}^2} (\varphi(\eta_k) - \varphi(\eta^*)) +\frac{m^2}{2\gamma}\|\zeta_k -\eta^* \|^2 
	\nonumber\\
	&-\frac{\gamma}{\theta_{k+1}^2}(\frac{1}{2}-\frac{L\gamma}{2}-\frac{D_1+D_2}{2})\E\|\frac{\eta_{k+1}-\lambda_{k+1}}{\gamma}\|^2 
	\nonumber\\
	&+ \frac{\gamma }{2mD_2\theta_{k+1}^2}\E_{\xi_{k+1}} \| \nabla \varphi(\omega_{j(k+1)})- \nabla \phi(\omega_{j(k+1)},\xi_{k+1})\|^2 
	\nonumber\\
	&+  \frac{1}{\theta_{k+1}^2}(\frac{\gamma L^2}{2D_1}+L)\|\lambda_{k+1}-\omega_{j(k+1)}\|^2  ,
	\end{align}
	
	Using the fact that $\frac{1-\theta_{k+1}}{\theta_{k+1}^2} = \frac{1}{\theta_k^2}$, summing from $k=0$ to $K$, and taking expectation on all the $\i_k$'s and $\xi_{k+1}$'s, we have
	\begin{align}
	&\frac{\E \varphi(\eta_{K+1}) -\varphi(\eta^*)}{\theta_{K+1}^2} +\frac{m^2}{2\gamma}\E\|\zeta_{K+1} - \eta^*\|^2
	\nonumber\\
	\le &\frac{1}{\theta_1^2} (\varphi(\eta_0) - \varphi(\eta^*)) +\frac{m^2}{2\gamma}\|\zeta_0 -\eta^* \|^2 
	\nonumber\\
	&-\sum_{k=0}^K\frac{\gamma}{\theta_{k+1}^2}(\frac{1}{2}-\frac{L\gamma}{2}-\frac{D_1+D_2}{2})\E\|\frac{\eta_{k+1}-\lambda_{k+1}}{\gamma}\|^2 
	\nonumber\\
	&+ \sum_{k=0}^K\frac{\gamma }{2mD_2\theta_{k+1}^2}\E \| \nabla \varphi(\omega_{j(k+1)})- \nabla \phi(\omega_{j(k+1)},\xi_{k+1})\|^2 
	\nonumber\\
	&+  \sum_{k=0}^K\frac{1}{\theta_{k+1}^2}(\frac{\gamma L^2}{2D_1}+L)\E\|\lambda_{k+1}-\omega_{j(k+1)}\|^2
	\nonumber\\
	\le &\frac{1}{\theta_1^2} (\varphi(\eta_0) - \varphi(\eta^*)) +\frac{m^2}{2\gamma}\|\zeta_0 -\eta^* \|^2 + \sum_{k=1}^K\frac{\epsilon }{16\theta_{k+1}}
	\nonumber\\
	&-(\frac{1}{2}-\frac{L\gamma}{2}-\frac{D_1+D_2}{2} - 4(\frac{\gamma^2 L^2}{2D_1}+L\gamma)
	(\frac{\tau^2+\tau}{m} + 2\tau)^2)
	\nonumber\\
	&\sum_{k=0}^K\frac{\gamma}{\theta_{k+1}^2}\E\|\frac{\eta_{k+1}-\lambda_{k+1}}{\gamma}\|^2,
	\end{align}
	where the second inequality follows from (\ref{eqn:omeganorm}) and $\E \| \nabla \varphi(\omega_{j(k+1)})- \nabla \phi(\omega_{j(k+1)},\xi_{k+1})\|^2  \le \frac{mD_2\theta_{k+1}\epsilon}{8\gamma}$.
	
	{\bf Step 5:}
	If we set $D_1 = D_2 = \gamma L$, and choose $\gamma$ satisfying $3 L \gamma + 12 L \gamma(\frac{\tau^2+\tau }{m}+2\tau)^2 \le 1$, then we have
	\begin{align}
	&\E\varphi(\eta_{K+1}) -\varphi(\eta^*) 
	\nonumber \\
	\le& \frac{\theta_{K+1}^2}{\theta_1^2}(\varphi(\eta_0) -\varphi(\eta^*) + \frac{\theta_1^2 m^2}{2\gamma}\|\zeta_0 -\eta^*\|^2) + \frac{\epsilon}{2},
	\end{align}
	where we use the fact that $\sum_{k=0}^K \frac{\theta_{K+1}^2}{\theta_{k+1}}\le \frac{4\sum_{k=0}^K (k+2m)}{(K+2m)^2}\le 8$.
	
	Then according to Lemma \ref{lemma:theta}, in order to get $\epsilon$ accuracy, we need to set $K = \frac{\sqrt{2m^2 (\varphi(\eta_0) - \varphi(\lambda^*)+\|\zeta_0-\lambda^*\|^2/(2\gamma))}}{\sqrt{\epsilon}} $.
	Since in the $k$-th iteration, we need to sample $M_k = \frac{8\sigma^2 \gamma}{m\epsilon L\gamma \theta_{k+1}} \le \frac{8 \sigma^2 (k+2m)}{mL\epsilon}$ mini-batch of samples,
	and the overall stochastic gradient access is $\sum_{k=0}^K \max(1,M_k) \le K+1 + \sum_{k=0}^{K} M_k \le K +1+ \frac{16\sigma^2 (K+2m)^2}{Lm\epsilon}$.
	\end{proof}
	
	\begin{theorem}[Equivalence between ASBCDS and PASBCDS]
		If we take the same $jp(k+1)$ and $\xi_{k+1}$ in each iteration of Algorithm ASBCDS and PASBCDS, then we have $\lambda_{k+1} =u_k + \theta_{k+1}^2 v_k $, $\zeta_{k+1} = u_{k+1}$ and $\eta_{k+1} = u_{k+1} + \theta_{k+1}^2 v_{k+1}$ for all $k = 0,\cdots, K$. 
	\end{theorem}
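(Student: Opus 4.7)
The plan is to proceed by induction on $k$, verifying all three identities simultaneously. The base case $k=0$ follows from the common initialization of ASBCDS and PASBCDS: shared $\eta_0,\zeta_0,u_0,v_0$ together with the assumption that $jp(1)$ and $\xi_1$ are identical across both algorithms force $\lambda_1 = u_0 + \theta_1^2 v_0$, and the first-step updates then produce $\zeta_1 = u_1$ and $\eta_1 = u_1 + \theta_1^2 v_1$.

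For the inductive step, I would first establish $\lambda_{k+1} = u_k + \theta_{k+1}^2 v_k$. The ASBCDS rule (visible already in the derivation at the start of the proof of the previous theorem) gives $\lambda_{k+1} = (1-\theta_{k+1})\eta_k + \theta_{k+1}\zeta_k$. Substituting the inductive hypotheses $\eta_k = u_k + \theta_k^2 v_k$ and $\zeta_k = u_k$, and invoking the identity $(1-\theta_{k+1})\theta_k^2 = \theta_{k+1}^2$ from Lemma~\ref{lemma:theta}, collapses the expression to $u_k + \theta_{k+1}^2 v_k$, which is precisely the formula PASBCDS uses to form $\lambda_{k+1}$ from $u_k,v_k$. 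A crucial consequence is that both algorithms then evaluate their stochastic gradient at the same point $\omega_{j(k+1)}$, and since $jp(k+1)$ and $\xi_{k+1}$ agree by hypothesis, the produced gradients $g_{k+1}$ coincide.

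With identical gradients and the same block $i_k$ being modified, I would compare the ASBCDS update $\zeta_{k+1}^{[i_k]} = \zeta_k^{[i_k]} - \tfrac{\gamma}{m\theta_{k+1}} g_{k+1}$ against the PASBCDS update for $u_{k+1}^{[i_k]}$ to obtain $\zeta_{k+1}=u_{k+1}$ (untouched blocks trivially match by the induction hypothesis). Finally, for $\eta_{k+1}$, I would plug the already-established expressions for $\lambda_{k+1}$ and $\zeta_{k+1}$ into the ASBCDS recursion $\eta_{k+1} = \lambda_{k+1} + m\theta_{k+1}(\zeta_{k+1}-\zeta_k)$, obtaining $\eta_{k+1} = u_k + \theta_{k+1}^2 v_k + m\theta_{k+1}(u_{k+1} - u_k)$, and then recognize the right-hand side as $u_{k+1} + \theta_{k+1}^2 v_{k+1}$ via the momentum-style relation defining $v_{k+1}$ in PASBCDS.

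The main obstacle is block-coordinate bookkeeping under asynchronous delays. I would need to verify that the delayed read $\omega_{j(k+1)}$ is the same in both algorithms block-by-block; this is itself an auxiliary inductive claim, since each block $[p]$ of $\omega$ was last written at some earlier iteration $jp(k+1)$, at which the three identities already hold by the primary induction, so that the two reconstructed $\omega$'s coincide coordinate-wise. Once this delayed-read synchronization is in hand, everything else reduces to the single algebraic identity $(1-\theta_{k+1})\theta_k^2 = \theta_{k+1}^2$ and the matching block-update rules, both of which are already available.
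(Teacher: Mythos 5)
Your proposal is correct and follows essentially the same route as the paper: induction on $k$, the identity $(1-\theta_{k+1})\theta_k^2 = \theta_{k+1}^2$ to collapse $\lambda_{k+1}=(1-\theta_{k+1})\eta_k+\theta_{k+1}\zeta_k$ into $u_k+\theta_{k+1}^2 v_k$, and the observation that matching $\omega_{j(k+1)}$'s force matching gradients and hence matching updates. The ``auxiliary inductive claim'' you flag for synchronizing the delayed reads is exactly what the paper formalizes with its two auxiliary algorithms (ASBCDS-Auxiliary and PASBCDS-Auxiliary), which replay the momentum recursion from iteration $jp(k+1)$ to $k$ with the gradient step suppressed and then reapply the same induction to show the two reconstructions of $\omega_{j(k+1)}$ coincide block by block.
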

	We use $\omega_{jl(k)}^o$ and $g_k^o$ denote the $\omega_{jl(k)}$ and $g_k$ generated by ASBCD, and use $\omega_{jl(k)}^n$ and$g_k^n$to denote the $\omega_{jl(k)}$ generated by PASBCD.
	Besides, we also assume that if $\omega_{jl(k)}^o$=$\omega_{jl(k)}^n$, then we have $g_k^o =g_k^n$.
	Now, we need to prove that $u_k= \zeta_k$ and $\eta_k = u_k + \theta_k^2 v_k$ by induction.
	
	When $k = 0$, $\zeta_k = u_k  = 0$ and $\eta_k = u_k + \theta_k^2 v_k =0$.
	Since $jp(0) =0$ for all $p\in\{1,\cdots,m\}$,then  we have $\omega_{j(1)}^o =\lambda_1= \omega_{j(1)}^n=0 = u_0+ \theta_1^2 v_0$ for all $l\in \{1,\cdots,m\}$, which ensures $g_1^o = g_1^n$.
	Then we have $\zeta_1 = u_1$ and
	\begin{align*}
	\eta_1 &= \lambda_1 + m \theta_1(\zeta_1 -\zeta_0)  = u_0 + \theta_1^2 v_0  + m \theta_1(\zeta_1 -\zeta_0)
	\\
	&= u_0 + \theta_1^2 v_0  + m \theta_1(u_1 -u_0)
	\\
	&= u_1 + \theta_1^2(v_0 - \frac{1-m\theta_1}{\theta_1^2}(u_1-u_0))
	\\
	&= u_1 + \theta_1^2 v_1.
	\end{align*}
	
	When $k > 0$, suppose we have $u_k= \zeta_k$ and $\eta_k = u_k + \theta_k^2 v_k$ , then
	\begin{align}\label{ind:lambda}
	\lambda_{k+1} &= (1-\theta_{k+1}) \eta_k + \theta_{k+1} \zeta_k
	\nonumber\\
	&= (1-\theta_{k+1}) (\eta_k -\zeta_k) + \zeta_k
	\nonumber\\
	&= (1-\theta_{k+1})\theta_k^2 v_k + u_k = \theta_{k+1}^2v_k + u_k,
	\end{align}
	where in the last line, we use the fact that $(1-\theta_{k+1})\theta_k^2 = \theta_{k+1}^2$.
	
	If we have $\omega_{jl(k+1)}^o = \omega_{jl(k+1)}^n$ for all $l \in \{1,\cdots,m\}$, then we have $g_{k+1}^o = g_{k+1}^n$ and $\zeta_{k+1} = u_{k+1}$.
	For $\eta_{k+1}$, we have
	\begin{align}\label{ind:eta}
	\eta_{k+1} &= \lambda_{k+1} + m \theta_{k+1}(\zeta_{k+1} -\zeta_k)  
	\nonumber\\
	&= u_k + \theta_{k+1}^2 v_k  + m \theta_{k+1}(\zeta_{k+1} -\zeta_k)
	\nonumber\\
	&= u_k + \theta_{k+1}^2 v_k  + m \theta_{k+1}(u_{k+1} -u_k)
	\nonumber\\
	&= u_{k+1} + \theta_{k+1}^2(v_k - \frac{1-m\theta_{k+1}}{\theta_{k+1}^2}(u_{k+1}-u_k))
	\nonumber\\
	&= u_{k+1} + \theta_{k+1}^2 v_{k+1}.
	\end{align}
	
	Now we need to prove $\omega_{j(k+1)}^o = \omega_{j(k+1)}^n$ .
	We consider the following auxiliary Algorithm~\ref{alg:ASBCD-A} and Algorithm ~\ref{alg:PASBCD-A}.

	\begin{algorithm}[h]
		\caption{ASBCDS-Auxiliary}
		\label{alg:ASBCD-A}
		\textbf{Input}: $\hat \eta_{jp(k+1)} =\eta_{jp(k+1)}$,$\hat\zeta_{jp(k+1)}=\zeta_{jp(k+1)}$, and $\theta_{(jp(k+1))}$, for all $p \in \{1,\cdots,m\}$.
		
		\begin{algorithmic}[1] 
			\FOR{$p = \{1,\cdots,m\}$}
			\FOR{$i = jp(k+1),\cdots,k$}
			\STATE  $\hat\lambda_{i+1}^{[p]} = \theta_{i+1}\hat\zeta_i^{[p]}  + (1-\theta_{i+1})\hat\eta_i^{[p]} $.
			\STATE $\hat\zeta_{i+1}^{[p]} = \hat\zeta_i^{[p]} $ ,
			\STATE$\hat\eta_{i+1} ^{[p]} =\hat \lambda_{i+1} ^{[p]} + \theta_{i+1}(\hat\zeta_{i+1}^{[p]} -\hat\zeta_i^{[p]} )$.
			\ENDFOR
			\ENDFOR
		\end{algorithmic}
		\textbf{Output}: $\eta_{k+1}$
	\end{algorithm}
	
	From Algorithm ~\ref{alg:ASBCD-A}, according to (\ref{eqn:lambda-omega}), we can conclude
	$\hat \lambda_{k+1} = \omega_{jl(k+1)}^o$ since $\hat\eta_i^{[p]} = \hat \lambda_i^{[p]}$ for all $p\in \{1,\cdots,m\}$ and $i\in\{jp(k+1),\cdots,k\}$.
	Note that in order to distinguish the variables, we use a superscript $\hat\cdot$ to indicate the variables generated by the auxiliary algorithms.
	
	\begin{algorithm}[th]
		\caption{PASBCDS-Auxiliary}
		\label{alg:PASBCD-A}
		\textbf{Input}: $\hat u_{jp(k+1)} =u_{jp(k+1)}$, 
		$\hat v_{jp(k+1)} =v_{jp(k+1)}$, and $\theta_{jp(k+1)}$
		for  all $p \in \{1,\cdots,m\}$
		\begin{algorithmic}[1] 
			\FOR {$p = 1,\cdots,m$}
			\FOR{$i = jp(k+1),\cdots,k$}
			\STATE $\hat\omega_{i+1}^{[p]} = \hat u_{i}^{[p]} +\theta_{i+1}^2\hat v_{i}^{[p]} ,$
			\STATE $\hat u _{i+1}^{[p]} =\hat u_i^{[p]} ,\hat v_{i+1}^{[p]} = \hat v_i^{[p]} $
			\ENDFOR
			\ENDFOR
		\end{algorithmic}
		\textbf{Output}: $\hat\eta_{k+1} = \hat u_{k+1} + \theta_{k+1}^2 \hat v_{k+1}$.
	\end{algorithm}
	Then follow the same induction in (\ref{ind:lambda}) and (\ref{ind:eta}), we can conclude that
	$\hat\eta_{k}^{[p]} = \hat u_{k}^{[p]} + \theta_k^2 \hat v_k^{[p]}$, $\hat \zeta_k^{[p]} = \hat u_k^{[p]}$ and $\hat \lambda_{k+1}^{[p]} = \hat u_k^{[p]} + \theta_{k+1}^2\hat v_k^{[p]} $.
	Since $ \hat u_{j(k+1)}^{[p]} = u_{jp(k+1)}^{[p]}$, and $\hat v_{j(k+1)}^{[p]}  = v_{jp(k+1)}$, we can conclude that $\hat \lambda_{k+1} = \omega_{j(k+1)}^n $ according to the update rule of $\omega^n$, i.e. line 3 in PASBCDS, which indicate $\omega_{k}^o = \omega_{k}^n$ and verifies the equivalence between ASBCDS and PASBCDS.

	\bibliographystyle{named}
	\bibliography{ijcai19}